\documentclass{article} 

\usepackage{graphicx}
\usepackage{subcaption}
\usepackage{booktabs} 
\usepackage{hyperref}

\usepackage{hyperref}

\usepackage[accepted]{icml2026}
\makeatletter
\providecommand{\Require}{\REQUIRE}

\providecommand{\State}{\STATE}
\providecommand{\Statex}{\STATE} 
\providecommand{\While}{\WHILE}
\providecommand{\EndWhile}{\ENDWHILE}
\providecommand{\For}{\FOR}
\providecommand{\EndFor}{\ENDFOR}

\providecommand{\Comment}[1]{\algorithmiccomment{#1}}
\makeatother

\usepackage{graphicx}%
\usepackage{multirow}%
\usepackage{amsmath,amssymb,amsfonts}%
\usepackage{amsthm}%
\usepackage{mathrsfs}%
\usepackage{xcolor}%
\usepackage{textcomp}%
\usepackage{manyfoot}%
\usepackage{booktabs}%
\usepackage{algorithm}%
\usepackage{listings}%

\usepackage{bm}
\usepackage{mathtools}
\usepackage{nicefrac}
\usepackage{microtype}
\usepackage{url}
\usepackage{enumitem}
\usepackage{array}
\usepackage{placeins}
\usepackage{caption}

\theoremstyle{plain}
\newtheorem{theorem}{Theorem}[section]%
\newtheorem{proposition}[theorem]{Proposition}%
\newtheorem{corollary}[theorem]{Corollary}%

\theoremstyle{definition}
\newtheorem{definition}[theorem]{Definition}%

\theoremstyle{remark}
\newtheorem{remark}[theorem]{Remark}%


\begin{document}

\twocolumn[
  \icmltitle{Stochastic Sparse Attention for Memory-Bound Inference}

  \begin{icmlauthorlist}
    \icmlauthor{Kyle Lee}{ucsb}
    \icmlauthor{Corentin Delacour}{ucsb}
    \icmlauthor{Kevin Callahan-Coray}{ucsb}
    \icmlauthor{Kyle Jiang}{ucsb}
    \icmlauthor{Can Yaras}{umich}
    \icmlauthor{Samet Oymak}{umich}
    \icmlauthor{Tathagata Srimani}{cmu}
    \icmlauthor{Kerem Y.\ Camsari}{ucsb}
  \end{icmlauthorlist}

  \icmlaffiliation{ucsb}{Department of Electrical and Computer Engineering, University of California, Santa Barbara, CA, USA}
  \icmlaffiliation{umich}{Department of Electrical and Computer Engineering, University of Michigan, Ann Arbor, MI, USA}
  \icmlaffiliation{cmu}{Department of Electrical and Computer Engineering,
Carnegie Mellon University, Pittsburgh, PA, USA}

  \icmlcorrespondingauthor{Kyle Lee}{kylelee@ucsb.edu}

  \icmlkeywords{Large Language Models, Transformers, Attention, Monte Carlo methods, Sparse computation, Energy-efficient inference}

  \vskip 0.3in
]

\printAffiliationsAndNotice{}  

\begin{abstract}

Autoregressive decoding becomes bandwidth-limited at long contexts, as generating each token requires reading all $n_k$ key and value vectors from KV cache. We present Stochastic Additive No-mulT Attention (SANTA), a method that sparsifies value-cache access by sampling $S \ll n_k$ indices from the post-softmax distribution and aggregates only those value rows. This yields an unbiased estimator of the post-softmax value aggregation while replacing value-stage multiply-accumulates with gather-and-add. We introduce stratified and systematic sampling to design variance-reduced, GPU-friendly variants. Evaluated on Llama-3.1-8B-Instruct at 32k-token contexts, S$^2$ANTA matches baseline accuracy while achieving up to $1.5\times$ decode-step attention-kernel speedup over FlashInfer and FlashDecoding on an NVIDIA RTX 6000 Ada. In batched long-context generation, these kernel gains translate to up to $1.25\times$ end-to-end decode-latency speedup. Finally, we propose Bernoulli $qK^\mathsf{T}$ sampling as a complementary technique to sparsify the score stage, reducing key-feature access through stochastic ternary queries. Both methods are complementary to upstream  quantization, low-rank projection, KV-cache compression, and KV-cache selection methods. Together, they point toward sparse, multiplier-free, and energy-efficient inference. We open-source our kernels at: https://github.com/OPUSLab/SANTA.git
\end{abstract}

\section{Introduction}

Transformers \citep{vaswani2017attention} underpin modern language models \citep{openai2024gpt4technicalreport,touvron2023llamaopenefficientfoundation,team2023gemini}. As deployments increasingly rely on long contexts, autoregressive decoding enters a regime where throughput is often limited by memory bandwidth: generating each new token repeatedly streams the Key--Value (KV) cache for all prior tokens. For example, for Llama-3.1-8B-Instruct at a 32k-token context, streaming bf16 keys and values is on the order of $\sim$128\,MB \emph{per layer per generated token}, and KV memory demands scale linearly with context length and batch size.

A broad literature mitigates this bottleneck from several angles. KV-cache quantization/compression reduces bytes per element \citep{liu2024kivi,hooper2024kvquant}; cache-management policies reduce the number of retained tokens \citep{tang2024quest,zhang2023h2o}; and structured sparsity or candidate selection reduces attention cost \citep{child2019generating,zaheer2020big,beltagy2020longformer,wang2020linformer,kitaev2020reformer,chen2024magicpig}. Architectural changes such as grouped-query attention (GQA) reduce KV footprint by sharing keys/values across heads \citep{ainslie2023gqa}, and decoding kernels improve IO locality for \emph{exact} attention \citep{dao2023flashdecoding,ye2025flashinfer}. However, even with optimized exact kernels, long-context decoding still incurs a persistent cost: each step touches essentially the full KV state. In contrast to methods that decide which tokens or cache blocks are retained, evicted, or reconstructed, our goal is to reduce the cost of the value aggregation operator over whatever KV state is made available at decode time.

We study a complementary inference-time lever: \emph{reducing the amount of KV data accessed per decode step} without permanently discarding cache contents. Our primary focus is the post-softmax value aggregation, where dense attention computes $AV$ by multiplying attention weights by all $n_k$ value rows. We introduce \textbf{S}tochastic \textbf{A}dditive \textbf{N}o-mul\textbf{T} \textbf{A}ttention (\textbf{SANTA}), which samples $S\ll n_k$ indices from the post-softmax distribution and aggregates only those sampled rows of $V$ via gather-and-add. This yields an unbiased estimator of the post-softmax value aggregation while reducing value-cache row reads per query from $n_k$ to $S$ (and eliminating value-stage multiplications in favor of additive accumulation). We further propose variance-reduced variants ($\mathbf{S^2}$\textbf{ANTA}) based on stratified and systematic sampling that bolster accuracy and admit GPU-friendly implementations. Since sparsifying $V$ reads yields the clearest bandwidth benefit in decoding, our systems evaluation targets decode-step acceleration at long contexts.

On modern GPUs, these changes translate into measurable kernel-level gains: our $S^2$ANTA CUDA kernels achieve up to $\mathbf{1.5\times}$ \textbf{decode-step attention-kernel speedup} over FlashInfer and FlashDecoding at 32k-token contexts while matching baseline accuracy on long-context prompts (Section~\ref{Hardware implementation and latency}).

Finally, we propose \textbf{Bernoulli $qK^\mathsf{T}$ sampling} as a complementary technique for the score stage: it represents query elements as Bernoulli variables to form sparse ternary queries in $\{-1,0,+1\}$, yielding an unbiased estimator of $qK^\mathsf{T}$ and inducing feature-sparse key access during decoding. In this work, our implemented kernels and measured speedups are driven primarily by value-stage sparsification; Bernoulli $qK^\mathsf{T}$ is presented as a complementary mechanism that can be combined with SANTA to sparsify both branches.

\textbf{Contributions.}
Concretely, we: (i) introduce SANTA and variance-reduced $S^2$ANTA as unbiased estimators of the post-softmax value aggregation that reduce value-cache reads from $n_k$ to $S$; (ii) design GPU kernels that parallelize sampling and sparse accumulation, demonstrating up to \textbf{$1.5\times$} decode-step attention-kernel speedup and up to \textbf{$1.25\times$} end-to-end decode-latency speedup in batched long-context generation while matching baseline long-context accuracy; (iii) empirically characterize tunable accuracy--efficiency trade-offs on GSM8K, MMLU, and long-context benchmarks; and (iv) propose Bernoulli $qK^\mathsf{T}$ sampling as a complementary score-stage estimator that can reduce key-feature access during decoding.

\section{Stochastic Additive No-mulT Attention (SANTA)}
\label{sec:SANTA}

\begin{figure}[h]
  \centering
  \includegraphics[width=\linewidth]{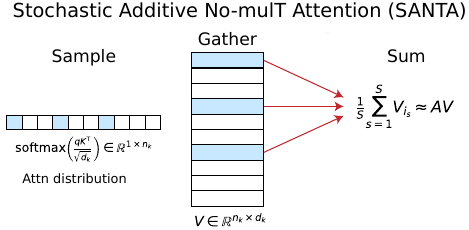}
  \caption{SANTA requires $S \ll n_k$ memory accesses to sampled rows of $V$, eliminating multiplications following the softmax operation. Normalization by $S$ is a bit shift if $S$ is a power of 2.}
  \label{fig:fig1}
\end{figure}

In principle, SANTA can be employed for the prefill or decode stages. Since the benefit of sparse memory access becomes apparent in decode, we present SANTA as a decoding algorithm. In autoregressive decoding, we consider a single query vector $q \in \mathbb{R}^{1\times d_k}$. Scaled dot-product attention (SDPA) is expressed as a function of the key matrix and value matrix:
\begin{equation}
\text{Attn}(q,K,V) = \text{softmax}\left(\frac{qK^\mathsf{T}}{\sqrt{d_k}}\right) V = AV,
\end{equation}
where $K \in \mathbb{R}^{n_k \times d_k}$ is the key matrix, $V \in \mathbb{R}^{n_k \times d_k}$ is the value matrix, and $A \in \mathbb{R}^{1 \times n_k}$ represents the attention scores as a probability distribution.

To approximate $AV$, we treat $A$ as a categorical distribution over keys and sample $S$ indices i.i.d.\ from it (with replacement). For query vector $q$, we fetch the corresponding $S$ rows of $V$ and average them. This yields an estimate of the attention output using only row indexing and addition.

As an illustrative example, consider number of keys $n_k = 3$. We sample $S=3$ one-hot attention vectors from the categorical distribution $A$:
\begin{equation}
\label{eq:santa1}
\widehat{AV} =
\frac{1}{3}
\left(
\begin{bmatrix}
1 & 0 & 0
\end{bmatrix}
+
\begin{bmatrix}
1 & 0 & 0
\end{bmatrix}
+
\begin{bmatrix}
0 & 0 & 1
\end{bmatrix}
\right) V,
\end{equation}
which simplifies by distributivity of matrix multiplication over addition:
\begin{equation}
\label{eq:santa2}
\begin{aligned}
\widehat{AV}
&= \frac{1}{3}\left(
\begin{bmatrix}1 & 0 & 0\end{bmatrix}V
+
\begin{bmatrix}1 & 0 & 0\end{bmatrix}V
+
\begin{bmatrix}0 & 0 & 1\end{bmatrix}V
\right) \\
&= \frac{1}{3}\bigl(V_{1} + V_{1} + V_{3}\bigr).
\end{aligned}
\end{equation}

where each $V_i \in \mathbb{R}^{1 \times d_k}$ is a sampled row of $V$. The one-hot attention vectors in Eqs.~\ref{eq:santa1},~\ref{eq:santa2} are only for illustrative purposes; they represent an indexing and gather operation, and are not materialized as vectors in memory.

Finally, SANTA can be summarized as:
\begin{equation}
\widehat{AV} = \frac{1}{S} \sum_{s=1}^S V_{i_s} \approx AV,
\label{eq:stochastic-add-only-attn}
\end{equation}
where $i_s$ is a sampled index and each $V_{i_s}$ denotes a gathered row from $V$. Multiplications in the value stage have therefore been eliminated in favor of sampling, indexing and addition. If $S$ is chosen as a power of 2 (and the model has a fixed point representation), normalization may be implemented as a bit-shift. Fig.~\ref{fig:fig1} illustrates SANTA's memory access benefits for autoregressive decoding: SANTA sparsely reads only $S\ll n_k$ sampled rows of $V$ and computes an additive average without multiplications after the softmax.

We provide a formal mathematical description of SANTA in Appendix~\ref{santa formal}. SANTA is an unbiased estimator of the attention function whose variance scales as $1/S$ (Appendix~\ref{santa-unbiased}, ~\ref{santa-var}). A dimension-free high-probability concentration bound for SANTA is deferred to Appendix~\ref{app:bernstein}.

\subsection{$S^2$ANTA: Stratified and Systematic Sampling}
\label{subsec:s2anta}

$S^2$ANTA reduces variance via stratified sampling: dividing the CDF into $S$ equal probability mass intervals and sampling once per interval for better coverage of the attention distribution. We devise two variants: independent stratified sampling ($\bm{S^2}$ANTA-strat) and systematic sampling ($\bm{S^2}$ANTA-sys).

The mechanics of $S^2$ANTA-strat remain identical to default SANTA, except the $S$ samples are drawn per stratum instead of iid from the full distribution. Systematic sampling ($\bm{S^2}$ANTA-sys) similarly splits the attention distribution CDF into $S$ equal probability mass partitions. However, whereas independent stratified sampling draws $S$ random offsets for each of $S$ strata, systematic sampling draws a single offset $U$ and applies the same offset to all strata, effectively drawing $S$ samples with a single random number. 

\paragraph{Construction.}
Fix a query $q$. Let $A_q=(p_{q1},\dots,p_{q n_k})$ be the attention weights and $F_q$ the associated CDF on $[0,1)$. Partition $[0,1)$ into $S$ equal intervals
\[
I_m := [m/S,(m+1)/S),\qquad m=0,\dots,S-1.
\]
Define two schemes:
\begin{itemize}
\item \textbf{Independent stratified ($\bm{S^2}$ANTA-strat).} Draw $T_m \sim \mathrm{Unif}(I_m)$ \emph{independently} for $m=0,\dots,S-1$, set $J_m := F_q^{-1}(T_m)$, and output
\(
\widehat{AV}_q^{\mathrm{ind}} := \frac{1}{S}\sum_{m=0}^{S-1} V_{J_m}.
\)

\item \textbf{Systematic ($\bm{S^2}$ANTA-sys).} Draw a single $U\sim\mathrm{Unif}([0,1/S))$ and take thresholds $T_m := U+m/S$. With $J_m := F_q^{-1}(T_m)$, output
\(
\widehat{AV}_q^{\mathrm{sys}} := \frac{1}{S}\sum_{m=0}^{S-1} V_{J_m}.
\)
\end{itemize}

$\bm{S^2}$ANTA-strat and $\bm{S^2}$ANTA-sys are unbiased estimators of attention (see Appendix~\ref{s2anta-unbiased}). Only $\bm{S^2}$ANTA-strat has a theoretical guarantee of variance reduction compared to default SANTA (see Appendix~\ref{variance-stratified}). Despite lacking a variance reduction guarantee, we validate that $\bm{S^2}$ANTA-sys exhibits comparable performance to $\bm{S^2}$ANTA-strat on reasoning benchmarks and long-context benchmarks. $\bm{S^2}$ANTA-sys demonstrates similar variance reduction to $\bm{S^2}$ANTA-strat in practice (see Appendix~\ref{empirical variance}). Systematic sampling with $\bm{S^2}$ANTA-sys is appealing because it draws samples using one random number instead of $S$ random numbers, which may be easier to implement in hardware.

\subsection{Theoretical Operations and Memory Accesses}

By virtue of SANTA's sparsity, in autoregressive decoding, SANTA demonstrates a $S/n_k$ reduction in post-softmax additions and reads to rows of $V$, where $S$ is the sample budget and $n_k$ is the number of keys. SANTA removes post-softmax multiplications in favor of additive accumulation (see Appendix~\ref{Theoretical operations and memory accesses in autoregressive decoding} for analysis of decode-step operations and memory accesses).

In prefill, while similar arguments hold for SANTA's reduced additions and removed multiplies, there is no longer sparse memory access to the value matrix $V$. Each of $n_q = n_k$ queries may each sample $S$ rows of $V$, therefore requiring $\gg S$ rows of $V$ when the union across all queries is considered (see Appendices~\ref{app:unique-v-rows},~\ref{prefill theoretical ops and mem access} for prefill analysis).

\section{GPU Kernels for Variance-Reduced SANTA}
\label{Hardware implementation and latency}

Section~\ref{sec:SANTA} introduced SANTA and variance-reduced sampling rules (stratified/systematic), which specify \emph{what} to sample to form an unbiased estimator of the value-stage aggregation $AV$. In this section we focus on \emph{how} to realize these samplers efficiently at decode time on GPUs: we introduce two kernel strategies, $S^2$ANTA-prop and $S^2$ANTA-flash, which differ in how they allocate samples across tiles (global proportional allocation vs.\ speculative per-tile sampling with deferred normalization). Both kernels instantiate the systematic $S^2$ANTA sampling rule introduced in Section~\ref{subsec:s2anta}; the suffixes ``prop'' and ``flash'' refer to GPU execution strategies rather than distinct mathematical estimators. Thus, Fig.~\ref{fig:santa-kernel-time} and Table~\ref{tab:ruler-llama-32k-prop-flash} evaluate GPU-engineered versions of $S^2$ANTA-sys.

\subsection{Parallelizing Stochastic Attention on GPUs}

Efficient implementation on GPUs presents a unique systems challenge: parallelizing the sampling control flow. Standard attention (SDPA) is amenable to ``split-KV'' strategies (e.g., FlashDecoding~\cite{dao2023flashdecoding}) because the reduction of values is associative: partial sums can be computed locally and merged later. Sampling, however, creates a sequential dependency: determining \textit{which} rows of $V$ to access generally requires a global Cumulative Distribution Function (CDF). A naive implementation would require a sequential pass over the global probability distribution to draw sample indices, leaving the GPU compute units idle and preventing parallel access to the value matrix.

To unlock parallel $V$-access, we must determine the number of samples per tile \textit{before} or \textit{independently} of the full global reduction. We propose two strategies to break this dependency:

\begin{enumerate}
    \item \textbf{Proportional Sample Allocation ($S^2$ANTA-prop):} We pre-calculate the probability mass of each tile to deterministically assign sample budgets. This requires a global barrier but ensures exact load balancing.
    \item \textbf{Speculative Local Sampling ($S^2$ANTA-flash):} Tiles sample independently with uniform budgets, speculating that all tiles are equally important, followed by a post-hoc re-weighting. This removes the barrier but introduces sample waste.
\end{enumerate}

We implement both strategies and evaluate their decode-step kernel latency and long-context accuracy at 32k-token prompts in Section~\ref{subsec:kernel-results-32k}.

\subsection{$S^2$ANTA-Prop: Exact Budget Allocation via Global Sync}

\begin{figure}[h]
  \centering
  \includegraphics[width=\linewidth]{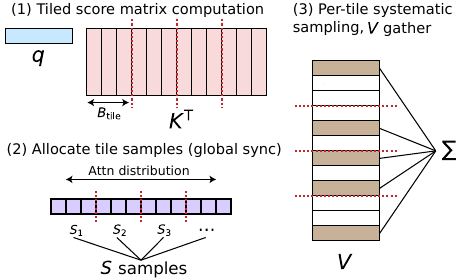}
  \caption{$S^2$ANTA-prop kernel overview.}
  \label{fig:$S^2$ANTA-prop}
\end{figure}

$S^2$ANTA-prop (\autoref{fig:$S^2$ANTA-prop}) breaks the sequential sampling dependency by resolving the global partition function $Z$ in a lightweight initial pass. 

The algorithm proceeds in three stages. The key dimension $n_k$ is partitioned into tiles of length $B_{\rm tile}$. 
\begin{enumerate}
    \item \textit{Pass 1 (score stash \& tile stats):} We compute attention scores exactly. The exponentiated scores (or a normalized equivalent) are written to global memory. Crucially, since the scores are scalars ($1 \times n_k$) while the values are vectors ($n_k \times d_k$), this stash consumes negligible bandwidth ($1/d_k$) compared to a full value fetch. We also compute and write local partition functions $Z_{\rm tile}$.
    \item \textit{Global budget allocation:} A separate kernel sums the tile statistics to find the global $Z$. It then assigns a specific sample count to each tile: $S_{\rm tile} \propto S \cdot (Z_{\rm tile} / Z)$.
    \item \textit{Pass 2 (parallel gather):} Using the stashed scores and the assigned $S_{\rm tile}$, each tile systematically samples indices and accumulates rows from $V$. Tiles with low probability mass are assigned $S_{\rm tile} = 0$, allowing them to skip the expensive $V$-read entirely.
\end{enumerate}

Comprehensive pseudocode is in Appendix~\ref{$S^2$ANTA-prop pseudocode}.

\subsection{$S^2$ANTA-Flash: Speculative Sampling with Deferred Normalization}

\begin{figure}[h]
  \centering
  \includegraphics[width=\linewidth]{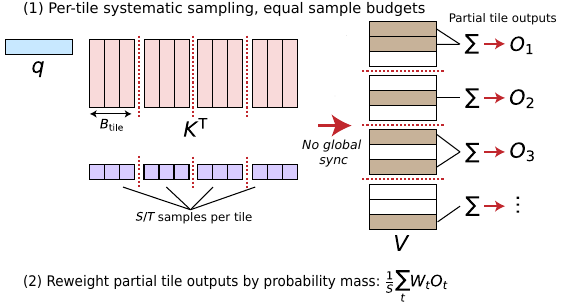}
  \caption{$S^2$ANTA-flash kernel overview.}
  \label{fig:$S^2$ANTA-flash}
\end{figure}

To eliminate the latency of the global synchronization barrier, $S^2$ANTA-flash (\autoref{fig:$S^2$ANTA-flash}) adopts a speculative approach structurally similar to FlashDecoding.

Instead of pre-calculating the global budget, each tile is allocated a fixed, uniform budget $S_{\rm tile} \approx S/T$, where $T$ is the number of tiles. Each tile samples locally \textit{as if} it contained the entire probability mass. A second reduction kernel then computes the true global partition function $Z$ and down-weights the partial outputs of tiles that had low probability mass. 

This method maximizes parallelism by removing the global barrier but suffers from \emph{sample waste}: tiles with low probability mass still consume compute and bandwidth to draw samples that are effectively down-weighted during the final merge. In our 32k-token evaluation (Section~\ref{subsec:kernel-results-32k}), matching SDPA accuracy requires substantially larger budgets for $S^2$ANTA-flash; the accuracy-matching operating point is $S{=}2048$ for $S^2$ANTA-flash versus $S{=}128$ for $S^2$ANTA-prop (Table~\ref{tab:ruler-llama-32k-prop-flash}). See Appendix~\ref{$S^2$ANTA-flash pseudocode} for full pseudocode.

\subsection{Kernel Results at Long Contexts (32k)}
\label{subsec:kernel-results-32k}

We evaluate decode-step attention \emph{kernel latency} together with end-task \emph{accuracy} in the long-context decoding regime. Within this section, SDPA is used for prefill and only the \emph{decode step} uses $S^2$ANTA. Kernel timings are reported for a single decoding step on an NVIDIA RTX 6000 Ada GPU (batch size 1) using bf16 Q/K/V tensors with Llama-3.1-8B-Instruct~\citep{meta_llama_3p1_8b_instruct_2024} tensor shapes; we compare against optimized FlashInfer and FlashDecoding backends. The measurement protocol and tensor layouts are provided in Appendix~\ref{app:decode_microbench}.

To make latency comparisons meaningful, we select one operating point per kernel family as the \emph{smallest} sample budget $S$ that matches SDPA accuracy on 32k-token long-context tasks. This yields $S{=}128$ for $S^2$ANTA-prop and $S{=}2048$ for $S^2$ANTA-flash. These are exactly the configurations that deliver $\approx1.5\times$ decode-step attention-kernel speedup in Fig.~\ref{fig:santa-kernel-time} while matching SDPA within confidence intervals in Table~\ref{tab:ruler-llama-32k-prop-flash}.

\begin{figure}[h]
  \centering
  \includegraphics[width=\linewidth]{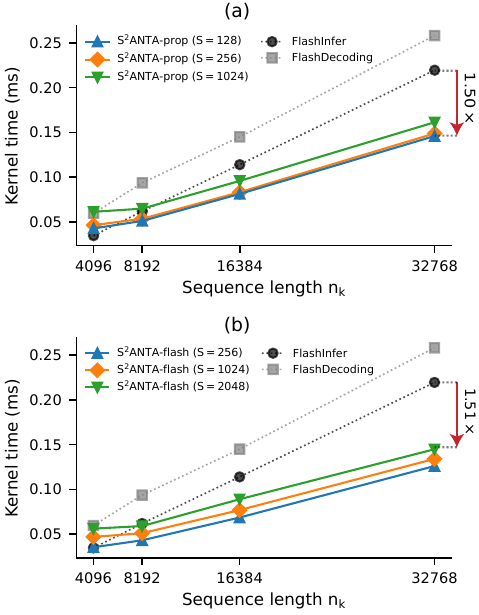}
  \caption{$S^2$ANTA kernel latency on Llama 8B tensor shapes for 1 decoding step. (a) $S^2$ANTA-prop demonstrates a $1.50\times$ speedup relative to FlashInfer at a 32k-token context. (b) $S^2$ANTA-flash similarly exhibits a $1.51\times$ speedup. Both operating points ($S=128$ for prop, $S=2048$ for flash) correspond to configurations that recover baseline accuracy (Table~\ref{tab:ruler-llama-32k-prop-flash}).}
  \label{fig:santa-kernel-time}
\end{figure}

We next verify that these speedup points preserve long-context accuracy. We evaluate on tasks from RULER~\citep{hsieh2024ruler} using 32{,}768-token prompts: frequent-word extraction (FWE), needle-in-a-haystack (NIAH), single-hop QA (QA$_1$, derived from SQuAD~\citep{rajpurkar-etal-2018-know}), and multi-hop QA (QA$_2$, derived from HotpotQA~\citep{yang-etal-2018-hotpotqa}). Results are shown in Table~\ref{tab:ruler-llama-32k-prop-flash}.

\begin{table}[h]
  \caption{Accuracy of Llama 8B on 4 long-context tasks with a 32{,}768-token prompt. SDPA is used in prefill. Tasks: frequent word extraction (FWE), needle-in-a-haystack (NIAH), single-hop QA (QA$_1$), multi-hop QA (QA$_2$). $S$: sample budget. Accuracy shows 95\% bootstrap confidence intervals.}
  \label{tab:ruler-llama-32k-prop-flash}
  \centering
  \scriptsize
  \setlength{\tabcolsep}{3pt}
  \renewcommand{\arraystretch}{1.15}
  \begin{tabular}{l r r r r r}
    \toprule
    Kernel & $S$ & FWE & NIAH & QA$_1$ & QA$_2$ \\
    \midrule
    $\bm{S^2}$\textbf{ANTA-prop} & \textbf{128}  & \textbf{95.40$\pm$1.10} & \textbf{98.25$\pm$0.62} & \textbf{64.40$\pm$4.50} & \textbf{60.20$\pm$4.20} \\
    $S^2$ANTA-prop & 256 & 95.47$\pm$1.10 & 98.50$\pm$0.57 & 63.40$\pm$4.10 & 60.60$\pm$4.00 \\
    $S^2$ANTA-prop & 1024 & 95.40$\pm$1.07 & 98.40$\pm$0.60 & 64.20$\pm$4.00 & 59.60$\pm$4.40 \\
    \midrule
    $S^2$ANTA-flash & 256  & 66.20$\pm$2.40 & 88.95$\pm$1.43 & 63.00$\pm$4.20 & 57.20$\pm$4.30 \\
    $S^2$ANTA-flash & 1024 & 91.60$\pm$1.43 & 98.10$\pm$0.62 & 62.20$\pm$4.10 & 61.00$\pm$4.30 \\
    $\bm{S^2}$\textbf{ANTA-flash} & \textbf{2048} & \textbf{94.13$\pm$1.13} & \textbf{98.25$\pm$0.62} & \textbf{64.60$\pm$4.20} & \textbf{60.00$\pm$4.40} \\
    \midrule
    \textbf{SDPA (baseline)} & \textbf{---} & \textbf{95.60$\pm$1.00} & \textbf{98.35$\pm$0.62} & \textbf{64.00$\pm$4.10} & \textbf{58.80$\pm$4.10} \\
    \bottomrule
  \end{tabular}
\end{table}

We further evaluate the same accuracy-matching operating points on LongBench v2~\citep{bai2025longbench}, a broader long-context reasoning benchmark. Results are shown in Table~\ref{tab:longbench-v2-main}. Across three runs, prompts longer than 32k tokens are truncated to 32k following the repository protocol. In this 30k-token regime, $S^2$ANTA-prop with $S{=}128$ and $S^2$ANTA-flash with $S{=}2048$ match SDPA within bootstrap confidence intervals, using the same operating points where Fig.~\ref{fig:santa-kernel-time} reports $\approx1.5\times$ decode-step attention-kernel speedup.

\begin{table}[h]
  \caption{LongBench v2 accuracy for Llama 8B at the main accuracy-matching operating points. $S$: sample budget. Accuracy shows 95\% bootstrap confidence intervals.}
  \label{tab:longbench-v2-main}
  \centering
  \scriptsize
  \setlength{\tabcolsep}{4pt}
  \renewcommand{\arraystretch}{1.15}
  \begin{tabular}{l r r r}
    \toprule
    Kernel & $S$ & Acc. (\%) & Avg. scored tok. \\
    \midrule
    $\bm{S^2}$\textbf{ANTA-prop} & \textbf{128} &
    \textbf{28.098$\pm$1.028} & \textbf{29998.4} \\
    $\bm{S^2}$\textbf{ANTA-flash} & \textbf{2048} &
    \textbf{28.827$\pm$3.422} & \textbf{29996.4} \\
    \midrule
    \textbf{SDPA (baseline)} & \textbf{---} &
    \textbf{28.363$\pm$1.509} & \textbf{29996.0} \\
    \bottomrule
  \end{tabular}
\end{table}

At these operating points, the speedups are driven primarily by reduced memory traffic from sparsifying reads of the value matrix $V$. For $S^2$ANTA-prop, $S{=}128$ corresponds to extremely sparse value access at 32k contexts; even accounting for the worst-case union across Llama 8B's four-query GQA group, this is $\lesssim1.56\%$ of value rows.

$S^2$ANTA-flash attains comparable speedup, but requires a larger total sample budget because uniform per-tile sampling expends samples on low-mass tiles before the deferred renormalization step (``sample waste'').

Appendix~\ref{app:deepseek-results} compares against top-$k$ at matched budgets $k=S$; top-$k$ is competitive on short-context GSM8K, but $S^2$ANTA variants outperform top-$k$ across nearly all 8k-token long-context tasks, with the largest gaps on multi-hop QA.

These timings isolate the decode-step attention kernel. We next verify that the kernel-level gain survives integration into a full generation loop, where prefill, MLPs, normalization, projections, and framework overheads remain unchanged.

The full LongBench v2 budget sweep, additional 8k-token kernel-accuracy results, and relative numerical error/tile size analyses are available in Appendices~\ref{app:longbench-v2},~\ref{app:s2anta-kernel-results}, and~\ref{app:same-state-fidelity}, respectively. 

\subsection{End-to-End Batched Decoding}
\label{subsec:e2e-latency}

We next measure end-to-end decode latency by integrating our kernels into a batched generation loop. The experiment uses Llama-3.1-8B-Instruct on an NVIDIA RTX 6000 Ada GPU with bf16 model weights and KV cache. We use representative QA prompts truncated to 30k tokens, batch size 6, and decode 2048 new tokens per prompt. Prefill is kept dense for all methods; $S^2$ANTA is applied only during autoregressive decoding. We use the same accuracy-matching operating points as in Section~\ref{subsec:kernel-results-32k}: $S=2048$ for $S^2$ANTA-flash and $S=128$ for $S^2$ANTA-prop.

\begin{table}[h]
  \caption{End-to-end batched generation latency for Llama 8B on RTX 6000 Ada. Prompts are truncated to 30k tokens, batch size is 6, and each prompt decodes 2048 new tokens. Prefill remains dense; $S^2$ANTA is applied only during decode. Results are averaged over 9 batches.}
  \label{tab:e2e-decode-latency}
  \centering
  \scriptsize
  \setlength{\tabcolsep}{2.5pt}
  \renewcommand{\arraystretch}{1.15}
  \begin{tabular*}{\columnwidth}{@{\extracolsep{\fill}}l r r r r r}
    \toprule
    Method & $S$ &
    \shortstack{Prefill\\(s/batch)} &
    \shortstack{Decode\\(s/batch)} &
    \shortstack{ms / output\\token} &
    \shortstack{Decode\\speedup} \\
    \midrule
    FlashAttn-2 & -- & 56.69 & 101.22 & 8.237 & 1.000$\times$ \\
    $\bm{S^2}$\textbf{ANTA-flash} & \textbf{2048} & \textbf{56.61} & \textbf{80.97} & \textbf{6.589} & \textbf{1.250$\times$} \\
    $\bm{S^2}$\textbf{ANTA-prop} & \textbf{128} & \textbf{57.03} & \textbf{82.36} & \textbf{6.702} & \textbf{1.229$\times$} \\
    \bottomrule
  \end{tabular*}
\end{table}

Table~\ref{tab:e2e-decode-latency} shows that the decode-step kernel gains translate to end-to-end generation: $S^2$ANTA-flash achieves a $1.25\times$ decode-latency speedup and $S^2$ANTA-prop achieves a $1.229\times$ speedup relative to FlashAttention-2~\citep{daoflashattention}. The prefill times remain essentially unchanged, as expected, since these measurements use dense prefill and replace only the decode-time softmax--$V$ aggregation. Thus, SANTA is best viewed as a specialized backend for long-context, decode-heavy workloads rather than a prefill accelerator.

The observed end-to-end speedup is consistent with a simple Amdahl-style bandwidth model. SANTA modifies only the decode-time softmax--$V$ aggregation, so its realized benefit is bounded by the share of decode bandwidth spent on the KV pathway. In bf16, for one batched decode step, the Llama 8B weights contribute roughly 15\,GB of traffic, while the KV cache for a batch of six 30k-token prompts contributes roughly 24\,GB. Combining this bandwidth share with the measured long-context attention-kernel speedup $s_{\mathrm{KV}}\approx1.5$ predicts $1/((15/39)+(24/39)/1.5)\approx1.26\times$, close to the observed $1.25\times$ decode speedup in Table~\ref{tab:e2e-decode-latency}. This also explains why longer contexts and larger batches should increase SANTA's marginal benefit, while shorter contexts, prefill-heavy workloads, or aggressive KV-cache quantization may reduce it; model-weight quantization may have the opposite effect by increasing the relative share of KV traffic.

Our measured kernel and end-to-end speedups are on RTX 6000 Ada GPUs. Because SANTA reduces KV-cache traffic, we expect the qualitative benefit to remain relevant in bandwidth-bound decode on datacenter GPUs such as A100/H100, but exact speedups will depend on memory bandwidth, cache hierarchy, batching, and kernel integration.

\subsection{Energy Considerations}

Beyond latency, $S^2$ANTA's design offers two complementary paths to energy reduction. First, sparse memory access reduces value-cache traffic from $n_k$ to $S$ rows per query; at accuracy-matching operating points, this corresponds to $>$90\% reductions in value-stage memory access at 32k contexts. Second, replacing dense multiply-accumulate with gather-and-add eliminates multiplication operations entirely. While our current GPU kernels primarily exploit the first benefit, hardware architectures optimized for sparse, adder-centric computation could realize both. These characteristics make $S^2$ANTA a natural fit for emerging AI accelerators where memory bandwidth and multiplier energy are primary constraints.

\section{General Reasoning \& Accuracy Verification}

Having established the long-context decode regime in Section~\ref{Hardware implementation and latency}, we additionally test SANTA, $S^2$ANTA-strat, and $S^2$ANTA-sys as mathematical constructions on GSM8K, MMLU, and 8k-token long-context tasks using PyTorch reference implementations. These experiments apply SANTA in both prefill and decoding, so we use the short-context results primarily as accuracy checks rather than speedup claims; sparse value-cache memory savings are most meaningful during autoregressive decoding. HumanEval code-generation results are reported in Appendix~\ref{app:humaneval}.

\subsection{GSM8K}
\label{GSM8K}

\begin{table}[h]
  \caption{GSM8K accuracy and average context length (prompt + answer) for Llama-3.1-8B-Instruct. $S$: SANTA sample budget. Accuracy shows 95\% bootstrap confidence intervals.}
  \label{tab:gsm8k-llama}
  \centering
  \scriptsize
  \renewcommand{\arraystretch}{1.15}
  \setlength{\tabcolsep}{2pt}
  \begin{tabular*}{\columnwidth}{@{\extracolsep{\fill}} r cc cc cc}
    \toprule
      & \multicolumn{2}{c}{$S^2$ANTA-sys} & \multicolumn{2}{c}{$S^2$ANTA-strat} & \multicolumn{2}{c}{SANTA} \\
      \cmidrule(lr){2-3}\cmidrule(lr){4-5}\cmidrule(lr){6-7}
    $S$ & Acc. (\%) & Tok. & Acc. (\%) & Tok. & Acc. (\%) & Tok. \\
    \midrule
      2   & 1.11$\pm$0.34 & 1071 & 1.01$\pm$0.32 & 1071 & 1.26$\pm$0.34 & 1075 \\
      4   & 1.67$\pm$0.40 &  569 & 1.54$\pm$0.38 &  611 & 1.57$\pm$0.37 &  825 \\
      8   & 2.10$\pm$0.45 &  340 & 1.74$\pm$0.40 &  325 & 1.44$\pm$0.38 &  207 \\
     16   & 44.63$\pm$1.58 &  349 & 39.12$\pm$1.50 &  352 & 5.51$\pm$0.72 &  350 \\
     32   & 68.59$\pm$1.42 &  339 & 67.00$\pm$1.48 &  343 & 38.26$\pm$1.43 &  348 \\
     64   & 76.42$\pm$1.34 &  341 & 74.43$\pm$1.31 &  343 & 63.63$\pm$1.49 &  346 \\
    128   & \textbf{77.33$\pm$1.34} &  342 & 75.64$\pm$1.33 &  341 & 70.23$\pm$1.42 &  341 \\
    256   & \textbf{77.56$\pm$1.34} &  343 & \textbf{78.17$\pm$1.28} &  343 & 75.61$\pm$1.42 &  342 \\
    \midrule
    \multicolumn{2}{l}{SDPA (baseline)} & \multicolumn{3}{c}{} & \textbf{78.06$\pm$1.33} & 344 \\
    \bottomrule
  \end{tabular*}
\end{table}

We consider Llama-3.1-8B-Instruct (``Llama 8B'')~\citep{meta_llama_3p1_8b_instruct_2024} on the GSM8K dataset as a mathematical reasoning benchmark~\citep{cobbe2021training, openai_gsm8k_dataset_2023}. Table~\ref{tab:gsm8k-llama} provides both accuracy and the average number of tokens (prompt + answer). Because the compute and memory access costs of SANTA are contextualized relative to the sequence length $n_k$, these sequence lengths justify the regimes where $S \ll n_k$ and SANTA is, in principle, cheaper than SDPA in attention's value stage.

Default SANTA exhibits respectable performance, but variance-reduced $S^2$ANTA variants demonstrate superior performance across every sample budget $S$. $S^2$ANTA implementations approach the accuracy of full SDPA (within 1\%) while $S$ remains shorter than the sequence length $n_k$, translating to memory-access savings and FLOP energy savings. In deployment settings where LLM users do not require maximum model performance, the sample budget $S$ can be tuned to save computation cost. For instance, $S^2$ANTA-sys achieves 76.42\% accuracy at $S=64$, which is about 19\% of the average sequence length of $n_k =341$ tokens. For a modest accuracy trade-off compared to SDPA's 78.06\%, $S^2$ANTA-sys in principle costs 19\% of the additions, 19\% of the $V$-matrix accesses, and \textit{no multiplies}.

We provide additional DeepSeek-R1-Distill-Qwen-7B~\cite{deepseekai2025deepseekr1incentivizingreasoningcapability} (DeepSeek 7B) results and comparisons to sparse top-k attention in Appendix~\ref{app:deepseek-results}.

\subsection{MMLU}
\label{mmlu}

\begin{table}[h]
  \caption{MMLU accuracy and average context length (prompt + answer) for Llama-3.1-8B-Instruct. $S$: SANTA sample budget. Accuracy shows 95\% bootstrap confidence intervals.}
  \label{tab:mmlu-llama}
  \centering
  \scriptsize
  \renewcommand{\arraystretch}{1.15}
  \setlength{\tabcolsep}{2pt}
  \begin{tabular*}{\columnwidth}{@{\extracolsep{\fill}} r cc cc cc}
    \toprule
      & \multicolumn{2}{c}{$S^2$ANTA-sys} & \multicolumn{2}{c}{$S^2$ANTA-strat} & \multicolumn{2}{c}{SANTA} \\
      \cmidrule(lr){2-3}\cmidrule(lr){4-5}\cmidrule(lr){6-7}
    $S$ & Acc. (\%) & Tok. & Acc. (\%) & Tok. & Acc. (\%) & Tok. \\
    \midrule
      2   & 24.52$\pm$1.22 & 1141 & 24.28$\pm$1.21 & 1141 & 24.89$\pm$1.27 & 1144 \\
      4   & 24.49$\pm$1.18 &  774 & 24.65$\pm$1.26 &  807 & 25.13$\pm$1.20 &  961 \\
      8   & 24.47$\pm$1.23 &  292 & 25.63$\pm$1.23 &  286 & 25.06$\pm$1.23 &  284 \\
     16   & 34.14$\pm$1.39 &  353 & 32.46$\pm$1.32 &  350 & 26.24$\pm$1.26 &  313 \\
     32   & 45.48$\pm$1.45 &  403 & 43.78$\pm$1.50 &  398 & 33.81$\pm$1.34 &  354 \\
     64   & \textbf{48.70$\pm$1.39} &  418 & \textbf{49.25$\pm$1.48} &  415 & 41.11$\pm$1.40 &  397 \\
    128   & \textbf{50.60$\pm$1.37} &  421 & \textbf{49.92$\pm$1.47} &  421 & 47.46$\pm$1.45 &  412 \\
    256   & \textbf{51.12$\pm$1.46} &  421 & \textbf{50.90$\pm$1.49} &  422 & \textbf{49.75$\pm$1.51} &  417 \\
    \midrule
    \multicolumn{2}{l}{SDPA (baseline)} & \multicolumn{3}{c}{} & \textbf{49.86$\pm$1.47} & 424 \\
    \bottomrule
  \end{tabular*}
\end{table}

We further evaluate SANTA on the MMLU benchmark~\citep{hendrycks2020measuring, cais_mmlu_dataset_2024}, which tests factual recall and general reasoning. MMLU benchmarking observes similar trends as GSM8K, where SANTA virtually recovers baseline accuracy at $S=256$, which is significantly shorter than the average sequence length $n_k = 417$. $S^2$ANTA variants outperform default SANTA across $S$ budgets of $64$--$256$, recovering baseline SDPA accuracy (within 1\%) with multiplier-free arithmetic and sparse memory access to rows of $V$ since $S \ll n_k$. Extended results with DeepSeek 7B and top-k comparisons are provided in Appendix~\ref{app:deepseek-results}.

\subsection{Long-Context Benchmarks}
\label{ruler}

\begin{table}[h]
  \caption{Accuracy of Llama 8B on 4 long-context tasks with an 8k-token prompt. Tasks: frequent word extraction (FWE), needle-in-a-haystack (NIAH), single-hop QA (QA$_1$), multi-hop QA (QA$_2$). $S$: SANTA sample budget. Accuracy shows 95\% bootstrap confidence intervals.}
  \label{tab:ruler-llama}
  \centering
  \scriptsize
  \setlength{\tabcolsep}{3pt}
  \renewcommand{\arraystretch}{1.15}
  \begin{tabular}{l r r r r r}
    \toprule
    Kernel & $S$ & FWE & NIAH & QA$_1$ & QA$_2$ \\
    \midrule
    $S^2$ANTA-sys & 64  & 96.27$\pm$0.97 & 94.05$\pm$1.07 & 71.80$\pm$3.90 & 67.20$\pm$4.20 \\
    $S^2$ANTA-sys & 128 & 97.80$\pm$0.70 & 94.15$\pm$1.02 & 68.20$\pm$4.10 & 66.80$\pm$4.00 \\
    $\bm{S^2}$\textbf{ANTA-sys} & \textbf{256} & \textbf{97.87$\pm$0.70} & \textbf{95.00$\pm$0.95} & \textbf{71.20$\pm$3.90} & \textbf{69.40$\pm$4.10} \\
    \midrule
    $S^2$ANTA-strat & 64  & 95.93$\pm$0.94 & 94.35$\pm$1.15 & 71.40$\pm$4.00 & 65.40$\pm$4.20 \\
    $S^2$ANTA-strat & 128 & 97.73$\pm$0.80 & 94.30$\pm$1.10 & 70.80$\pm$4.20 & 67.40$\pm$4.20 \\
    $\bm{S^2}$\textbf{ANTA-strat} & \textbf{256} & \textbf{98.33$\pm$0.60} & \textbf{94.55$\pm$1.05} & \textbf{71.20$\pm$4.00} & \textbf{67.00$\pm$4.20} \\
    \midrule
    SANTA & 64  & 92.53$\pm$1.27 & 93.95$\pm$1.30 & 64.60$\pm$4.20 & 63.40$\pm$4.20 \\
    SANTA & 128 & 94.53$\pm$1.13 & 91.45$\pm$1.38 & 67.80$\pm$4.10 & 63.40$\pm$4.30 \\
    SANTA & 256 & 96.20$\pm$0.93 & 93.15$\pm$1.18 & 68.00$\pm$4.10 & 66.40$\pm$4.10 \\
    \midrule
    \textbf{SDPA (baseline)} & \textbf{---} &
    \textbf{98.53$\pm$0.60} & \textbf{94.55$\pm$1.00} &
    \textbf{71.80$\pm$3.90} & \textbf{68.60$\pm$4.20} \\
    \bottomrule
  \end{tabular}
\end{table}

We benchmark SANTA on long-context tasks borrowed from RULER~\citep{hsieh2024ruler} with a prompt length of 8192 tokens. The theoretical benefit of $S^2$ANTA's sparse computation is most stark in this long-context regime. With a budget of $S=256$ in an 8192-token sequence, \textbf{$\bm{S^2}$ANTA uses just $\bm{S / n_k = 3.125\%}$ of the value-stage additions and memory accesses compared to full attention} (following Table~\ref{theoretical_flops_and_mem}). It achieves this while using \textbf{no multiplications} and recovering baseline SDPA accuracy (within $\pm1\%$) across every task. 

For full clarity, when Llama 8B's grouped-query attention~\cite{ainslie2023gqa} with 4 grouped queries is fully accounted for, then memory access to unique rows of $V$ can in the worst case reach $12.5\%$, which remains extremely sparse. Extended results with top-k comparisons are provided in Appendix~\ref{app:deepseek-results}.

\section{Bernoulli $qK^\mathsf{T}$ for Sparse Key Access}
\begin{figure}[h]
  \centering
  \includegraphics[width=\linewidth]{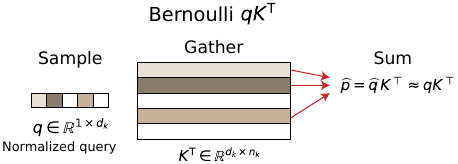}
  \caption{Bernoulli $qK^\mathsf{T}$ represents elements of the query vector as Bernoulli probabilities, sparsifying along the feature dimension $d_k$ with multiplier-free arithmetic.}
  \label{fig:bernoulli_only}
\end{figure}

SANTA reduces memory access for the value matrix by sampling from the softmax distribution; however, it leaves the score computation ($qK^\mathsf{T}$) intact. As illustrated in Fig.~\ref{fig:bernoulli_only}, we demonstrate that Bernoulli sampling of the query naturally zeroes out query-irrelevant features, thereby effectively reducing memory access for the key matrix. 

We propose representing each query element as an independent Bernoulli variable, allowing the corresponding key feature to be pruned when the sampling probability is low. While this method is applicable to the prefill phase, it induces sparse key access specifically during decoding; therefore, we focus on a single query $q$. First, the query vector is normalized as $q\leftarrow q/\text{norm}$, where $\text{norm}\geq \max_i |q_i|$. This ensures that each element $q_i\in[-1,1]$ can be interpreted as a probability.
Next, we estimate the product $p=qK^\mathsf{T}$ as:
\begin{equation}
    \widehat{p}=\frac{\text{norm}}{B}\sum_{n=1}^B\widehat{q}^{(n)} K^\mathsf{T},\label{qk_estimator}
\end{equation}
where $B$ is the sample count and $\widehat{q}^{(n)}$ is a sparse ternary vector constructed via Bernoulli sampling (detailed in Appendix~\ref{bernoulli_qk_methods}). The linearity of expectation ensures this estimator is unbiased. Because $\widehat{q}^{(n)}$ contains only values in $\{-1, 0, +1\}$, $\widehat{p}$ can be computed without multiplications by gathering and accumulating selected rows of $K^\mathsf{T}$, followed by normalizing bit shifts for fixed-point representations.

Table~\ref{tab:qk-bernoulli} shows GSM8K results for Bernoulli $qK^\mathsf{T}$ applied to BitNet 2B~\citep{ma2025bitnetb1582b4ttechnical} for both prefill and decoding. Using $B=4$ stratified samples nearly recovers the accuracy of standard SDPA for the GSM8K benchmark, although the approximated scores typically have a $30\%$ error (Appendix \ref{appendix:bernoulli_error}). This suggests that the $qK^{\mathsf{T}}$ product within the highly quantized BitNet 2B architecture can tolerate numerical errors while maintaining strong overall performance.

Crucially, Bernoulli $qK^\mathsf{T}$ enables sparse access to cached keys, a benefit that is uniquely effective during decoding (the prefill phase requires nearly all features). This is shown in Table~\ref{tab:qk-bernoulli}: with $B=4$, the model needs only $67.5\%$ of key features per head (vs. $\sim 100\%$ for prefill), while maintaining accuracy within $2\%$ of the SDPA baseline.

\begin{table}[h]
    \centering
    \caption{Performance of Bernoulli $qK^\mathsf{T}$ (stratified) on GSM8K (BitNet 2B) with the test set evaluated three times, and confidence intervals obtained by bootstrapping.}
    \label{tab:qk-bernoulli}
    \scriptsize 
    \setlength{\tabcolsep}{3pt}
    \renewcommand{\arraystretch}{1.25}
    \begin{tabular}{l c r@{ $\pm$ }l c c c}
        \toprule
        \multirow{2.5}{*}{\textbf{Method}} & 
        \multirow{2.5}{*}{$\mathbf{B}$} & 
        \multicolumn{2}{c}{\textbf{Acc. (\%)}} & 
        \textbf{Avg.} & 
        \multicolumn{2}{c}{\textbf{$K^\mathsf{T}$ Access (\%)}} \\
         & & \multicolumn{2}{c}{(95\% CI)}  & \textbf{Tok.} & Head & Group \\
        \midrule
        
        \textbf{SDPA} & -- & \textbf{65.7} & \textbf{1.5}  & 306.1 & 100 & 100 \\
        \midrule
        
        \multirow{5}{*}{\shortstack[l]{Bernoulli $qK^\mathsf{T}$}} 
           & 1  & 1.9  & 0.43  & 1478.6 & 25.6 & 65.8 \\
           & 2  & 48.7 & 1.4  & 345.9  & 46.3 & 88.0 \\
           & \textbf{4}  & \textbf{64.5} & \textbf{1.5}  & 309.4  & \textbf{67.5} & \textbf{97.9} \\
           & 8  & 65.1 & 1.5 & 316.9  & 82.6 & 99.8 \\
           & 16 & 65.8 & 1.5  & 317.7  & 91.1 & 100 \\
        \midrule
        
        \multirow{4}{*}{\shortstack[l]{Mean Group Query\\Bernoulli $qK^\mathsf{T}$}} 
           & 2  & 48.6 & 1.5 & 318.6 & 57.9 & 57.9 \\
           & \textbf{4}  & \textbf{63.8} & \textbf{1.5}  & 311.4 & \textbf{84.7} & \textbf{84.7} \\
           & 8  & 65.6 & 1.6  & 311.4 & 97.1 & 97.1 \\
           & 16 & 66.2 & 1.5  & 310.2 & 99.7 & 99.7 \\
        \bottomrule
    \end{tabular}
\end{table}

\subsection{Mean Group Query}
While Bernoulli $qK^\mathsf{T}$ effectively prunes irrelevant features for individual heads, Grouped Query Attention (GQA) \cite{ainslie2023gqa} complicates this by sharing keys and values across multiple heads (e.g., groups of 4 in BitNet). Consequently, the set of accessed features becomes the union of those required by all queries in the group, significantly increasing memory traffic. Table~\ref{tab:qk-bernoulli} illustrates this effect: with $B=4$ samples, $K^\mathsf{T}$ access jumps from $67.5\%$ per head to $97.9\%$ for the group. To preserve sparsity in this scenario, we propose using a representative query $m$ for the entire group, defined as the mean of the queries' absolute values. We then construct the estimator $\widehat{m}$ for the mean query $m$ via Monte Carlo sampling (detailed in Appendix~\ref{methods_qk_GCA}), which stochastically prunes the low-amplitude mean features.

With $B=4$ samples, $\widehat{m}$ contains approximately 15\% of zero elements, which allows fetching only 85\% of key features for the group (Table~\ref{tab:qk-bernoulli}). The product $p=qK^\mathsf{T}$ for each query is finally estimated as:
\begin{equation}
    \widehat{p}=\big(\widehat{m}\odot \frac{q}{m}\big)K^{\mathsf{T}},
    \label{mean_group_query}
\end{equation}

where $\odot$ denotes element-wise multiplication. In this scenario, calculating $\widehat{p}$ requires normalization by $m$ to compensate for the bias introduced by the mean group query. As shown in Table~\ref{tab:qk-bernoulli}, the resulting estimator achieves similar accuracy to the single-query configuration while saving 15\% of key memory accesses. We employ the mean group query estimator (Eq.~\ref{mean_group_query}) in all the following experiments.

\subsection{Long-Context Benchmarks}
We evaluate our method on long-context tasks using Llama 8B, applying Bernoulli $qK^\mathsf{T}$ strictly during decoding since prefill requires full feature access. As detailed in Table~\ref{tab:qk-bernoulli-long_context}, using $B=8$ samples reduces key feature access to $72.8\%$ for Llama 8B, with minimal accuracy loss ($<2\%$ on NIAH, QA$_1$, and QA$_2$). Furthermore, this experiment highlights that key memory access patterns vary significantly across models. In contrast to Llama 8B, BitNet 2B requires accessing $97.1\%$ of features for the same budget ($B=8$, Table~\ref{tab:qk-bernoulli}), a difference we attribute to distinct query distributions.

\begin{table}[h]
\centering
    \caption{Accuracy of Llama 8B on 4 long-context tasks with a 4{,}096-token prompt. Tasks: frequent word extraction (FWE), needle-in-a-haystack (NIAH), single-hop QA (QA$_1$), multi-hop QA (QA$_2$). $B$: Bernoulli $qK^T$ (mean group query) sample budget. Accuracy shows 95\% bootstrap confidence intervals.}
    \label{tab:qk-bernoulli-long_context}
\resizebox{\columnwidth}{!}{
\begin{tabular}{c c c c c c c}
  \toprule
  \shortstack[c]{Kernel \\ ($qK^\mathsf{T}$)} & $B$ & \shortstack[c]{$K^\mathsf{T}$ \\ Access (\%)} & FWE & NIAH & QA$_1$ & QA$_2$ \\
  \midrule
  Bernoulli & 4  & 47.3 & 70.4$\pm$2.0 & 87.2$\pm$1.9 & 72.0$\pm$3.8 & 66.2$\pm$4.1 \\
  Bernoulli & \textbf{8}  & \textbf{72.8} & 89.4$\pm$1.4 & \textbf{93.8}$\pm$\textbf{1.4} & \textbf{73.6}$\pm$\textbf{3.9} & \textbf{70.2}$\pm$\textbf{3.9} \\
  Bernoulli & 16 & 92.1 & 92.7$\pm$1.2 & 94.5$\pm$1.2 & 74.8$\pm$3.5 & 71.6$\pm$4.2 \\
  \midrule
  \textbf{SDPA} & -- & -- &
  \textbf{94.0$\pm$1.1} & \textbf{95.0$\pm$1.2} &
  \textbf{73.8$\pm$3.8} & \textbf{70.8$\pm$4.0} \\
  \bottomrule
\end{tabular}
}
\end{table}

\subsection{Combining Bernoulli $qK^\mathsf{T}$ and SANTA}
Combining SANTA with Bernoulli $qK^\mathsf{T}$ enables sparse key--value accesses during decoding while replacing the majority of multiplications with additions. Table~\ref{tab:qk_plus_santa} shows the accuracy for the GSM8K benchmark using BitNet 2B and $B=4$ Bernoulli samples for the $qK^\mathsf{T}$ computation, as the number of SANTA samples $S$ is varied for the value stage. Using $B=4$ for queries and $S=64$ for attention weights almost recovers the SDPA baseline with less than $2\%$ accuracy drop. Critically, this combined approach reduces key access by $15.3\%$ and value access by $89.2\%$. 

\begin{table}[h]
    \centering
    \caption{Performance of Bernoulli $qK^\mathsf{T}$ (mean group query) + SANTA on GSM8K (BitNet 2B) with the test set evaluated three times (CI computed by bootstrapping).}
    \label{tab:qk_plus_santa}
    \scriptsize
    \setlength{\tabcolsep}{6pt} 
    \renewcommand{\arraystretch}{1.25}
    
    \begin{tabular}{l r@{ $\pm$ }l c c}

        \toprule
        \multirow{2.5}{*}{\textbf{Config}} &
        \multicolumn{2}{c}{\textbf{Acc. (\%)}} &
        \textbf{Avg.} &
        \textbf{(K,V) Access (\%)} \\ 
        \cmidrule(lr){2-3} 
         & \multicolumn{2}{c}{(95\% CI)}  & \textbf{Tok.} & Group \\ 
         \midrule
         \textbf{SDPA} & \textbf{65.7} & \textbf{1.5}  & 306.1 & (100, 100)  \\
         
        \midrule

        $B=4,\, S=8$  & 55.0& 1.6  & 319.2 & (84.7, 2.4)  \\
        $B=4,\, S=16$  & 60.7 & 1.5  & 313.3 & (84.7, 3.7)  \\
        $B=4,\, S=32$  & 62.3 & 1.5  & 308.4 & (84.7, 6.2)  \\
        $\mathbf{B=4,\, S=64}$  & \textbf{63.8} & \textbf{1.5}  & 304.6 & \textbf{(84.7, 10.8)}  \\
        $B=4,\, S=128$ & 64.1 & 1.5  & 303.2 & (84.7, 14.3)  \\

        \bottomrule
    \end{tabular}
\end{table}

\section{Related Work}

\textbf{Exact attention kernels and memory-bound decoding.}
FlashAttention-style kernels optimize IO locality and utilization for \emph{exact} attention, including decoding-focused variants \citep{dao2023flashdecoding,ye2025flashinfer}. However, exact decoding still streams essentially the full KV state each step at long contexts. Our approach is complementary: rather than further optimizing a full KV scan, we reduce the amount of KV data accessed at runtime by sparsifying value-cache reads (and, for Bernoulli $qK^\mathsf{T}$, key-feature reads).

\textbf{Reducing KV footprint, retention, and selection.}
KV quantization/compression reduces bytes per KV element~\citep{liu2024kivi,hooper2024kvquant}, weight-only quantization targets model parameters~\citep{frantar2022gptq,dettmers2023qlora}, and GQA reduces KV size through head sharing~\citep{ainslie2023gqa}. Recent long-context systems also reduce the effective cache through selection, retention, or offloading: ShadowKV~\cite{sun2025shadowkv}, OmniKV~\cite{hao2025omnikv}, and DuoAttention~\cite{xiao2025duoattention} decide which KV states are retained, reconstructed, selected, or assigned to retrieval/streaming heads. These methods answer a different systems question from SANTA. They reduce or restructure the candidate KV state before attention is evaluated, whereas SANTA modifies the softmax--$V$ aggregation over whatever KV state is presented to the operator. Thus, after an upstream method reduces the candidate set from $n$ to $n'$, SANTA can in principle replace dense aggregation over $n'$ value rows with sampled aggregation over $S$ rows. The marginal benefit will depend on $n'$, hardware, batching, and task, so we view such hybrids as complementary but leave empirical evaluation to future work. MLA-style architectures~\cite{deepseekai2024deepseekv2strongeconomicalefficient} may similarly reduce the KV bandwidth share targeted by SANTA, making them an important future benchmark.

\textbf{Approximating attention and sampling vs.\ truncation.}
Structured sparsity and low-rank approximations reduce attention cost (often in prefill) \citep{child2019generating,zaheer2020big,beltagy2020longformer,wang2020linformer}. In decoding, candidate selection and query-pruning methods reduce the cost of identifying relevant tokens or approximating $qK^\mathsf{T}$ \citep{kitaev2020reformer,chen2024magicpig,sparQribar2024}. This is where our two methods differ in scope: \textbf{SANTA operates in the value stage and is orthogonal to score-stage approximations}, while \textbf{Bernoulli $qK^\mathsf{T}$ is an alternative score-stage estimator} that induces feature-sparse key access via stochastic ternary queries. 

Sampling-based attention approximations have also been explored~\citep{kim2022fastmontecarloapproximationattention}. A direct deterministic baseline is top-$k$ attention \citep{gupta2021memory}, which reduces value reads by truncation but is biased. In contrast, SANTA provides an unbiased estimator of the post-softmax value aggregation with controllable variance, and we emphasize variance reduction (stratified/systematic sampling) together with GPU kernels that realize measured speedups in the memory-bound decoding regime. Related stochastic-computing implementations in spiking Transformer
architectures are orthogonal to this focus~\citep{song2024stochastic}. We do not claim that stochastic sampling dominates deterministic truncation in all regimes: Appendix~\ref{app:deepseek-results} shows that top-$k$ is competitive on short-context GSM8K. In the long-context regime targeted by SANTA, however, the same appendix shows $S^2$ANTA outperforming top-$k$ across nearly all 8k-token tasks at matched budgets, especially on multi-hop QA, where the answer may depend on several parts of the context rather than only the tokens with the largest attention scores.

\section{Conclusion}

We introduced SANTA, a stochastic sparse-attention method for long-context autoregressive decoding that samples $S\ll n_k$ value rows from the post-softmax distribution and aggregates them with gather-and-add. The estimator is unbiased, admits practical variance reduction through stratified/systematic sampling, and gives a tunable accuracy--latency knob through the sample budget $S$. On Llama-3.1-8B-Instruct, our $S^2$ANTA kernels match baseline accuracy on 32k-token RULER tasks and LongBench v2 while achieving up to $1.5\times$ decode-step attention-kernel speedup and up to $1.25\times$ end-to-end decode-latency speedup on RTX 6000 Ada. SANTA is specialized for long-context, decode-heavy workloads and is complementary to cache compression, retention, offloading, and selection methods. We also presented Bernoulli $qK^\top$ sampling as a forward-looking score-stage
mechanism for sparse key-feature access.

\section*{Acknowledgments}
Research was sponsored by the Army Research Office and was accomplished under Grant Number W911NF-24-1-0228. The views and conclusions contained in this document are those of the authors and should not be interpreted as representing the official policies, either expressed or implied, of the Army Research Office or the U.S. Government. The U.S. Government is authorized to reproduce and
distribute reprints for Government purposes notwithstanding any
copyright notation herein. 

This material is based upon work supported by the National Science Foundation Graduate Research Fellowship Program under Grant No. 2139319. Any opinions, findings, and conclusions or recommendations expressed in this material are those of the author(s) and do not necessarily reflect the views of the National Science Foundation.

This work used resources available through the National Research Platform (NRP) at the
University of California, San Diego. NRP has been developed, and is supported in part, by funding
from National Science Foundation, from awards 1730158, 1540112, 1541349, 1826967,
2112167, 2100237, and 2120019, as well as additional funding from community partners.

\section*{Impact Statement}
This paper presents work whose goal is to advance the field of machine learning. There are many potential societal consequences of our work, none of which we feel must be specifically highlighted here.

\bibliographystyle{icml2026}
\bibliography{references}

\appendix

\section{Formal Description of SANTA}
\label{santa formal}

For query $q$, the SANTA estimator can be written as
\begin{equation}
\widehat{AV}_q = \frac{1}{S} \sum_{s=1}^S V_{i_{q,s}}, \qquad i_{q,s} \sim \text{Categorical}(A_q).
\end{equation}

\begin{definition}[SANTA estimator]
Let $Q\in\mathbb{R}^{n_q\times d_k}$, $K\in\mathbb{R}^{n_k\times d_k}$,
$A=\operatorname{softmax}\!\left(QK^{\mathsf T}/\sqrt{d_k}\right)\in\mathbb{R}^{n_q\times n_k}$,
and $V\in\mathbb{R}^{n_k\times d_k}$.
For each query index $r\in\{1,\dots,n_q\}$, sample
$i_{r,1},\dots,i_{r,S}\stackrel{\text{i.i.d.}}{\sim}\mathrm{Categorical}(A_r)$
and define $\widehat{AV}_r=\frac{1}{S}\sum_{s=1}^S V_{i_{r,s}}$.
Stacking all queries gives $\widehat{AV}\in\mathbb{R}^{n_q\times d_k}$.
\end{definition}

\subsection{Unbiasedness of SANTA}
\label{santa-unbiased}

\begin{proposition}[Unbiasedness]
\label{prop:santa-unbiased}
\(
\mathbb{E}[\widehat{AV}] = AV.
\)
\end{proposition}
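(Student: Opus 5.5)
The argument works row by row, since the estimator $\widehat{AV}$ is assembled by stacking the per-query estimates $\widehat{AV}_r$ and expectation of a matrix is taken entrywise. It therefore suffices to show $\mathbb{E}[\widehat{AV}_r] = A_r V$ for each fixed query index $r\in\{1,\dots,n_q\}$. Throughout, $A$, $K$, $Q$ and $V$ are deterministic; only the sampled indices $i_{r,s}$ are random.

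Fix $r$. By linearity of expectation applied to the definition $\widehat{AV}_r=\frac{1}{S}\sum_{s=1}^S V_{i_{r,s}}$, we have $\mathbb{E}[\widehat{AV}_r]=\frac{1}{S}\sum_{s=1}^S \mathbb{E}[V_{i_{r,s}}]$. Each $i_{r,s}$ has law $\mathrm{Categorical}(A_r)$, so $\mathbb{E}[V_{i_{r,s}}]=\sum_{j=1}^{n_k} \Pr(i_{r,s}=j)\,V_j=\sum_{j=1}^{n_k} A_{rj} V_j = A_r V$. This step can be written cleanly with the one-hot representation from Eq.~\ref{eq:santa1}: $V_{i_{r,s}} = e_{i_{r,s}}^{\mathsf T} V$, where $e_{i_{r,s}}$ is the indicator vector. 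Then $\mathbb{E}[e_{i_{r,s}}]=A_r^{\mathsf T}$, and linearity of the fixed map $x\mapsto x^{\mathsf T}V$ gives the claim. Averaging $S$ identical terms yields $A_r V$; stacking over $r$ gives $\mathbb{E}[\widehat{AV}]=AV$.

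There is no real obstacle here. The only point worth stating is that $A_r$ is a valid probability vector: its entries are nonnegative and sum to one by the definition of the softmax. This makes $\mathrm{Categorical}(A_r)$ well defined, and because $V$ has finitely many rows, all expectations are finite. Independence of the samples is not needed for unbiasedness, only the identical marginals; independence matters later for the $1/S$ variance in Appendix~\ref{santa-var}.
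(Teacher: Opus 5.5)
Your proof is correct and matches the paper's argument: fix a query, use $\Pr(i_{q,s}=j)=A_{qj}$ to get $\mathbb{E}[V_{i_{q,s}}]=\sum_j A_{qj}V_j$, average the $S$ identical terms, and apply linearity of expectation row by row. Your extra remarks are accurate but go beyond what the proof needs: the one-hot reformulation, and the observation that only identical marginals (not independence) are required.
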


\begin{proof}[Proof of Proposition~\ref{prop:santa-unbiased}]
Fix any $q$. Since $\Pr(i_{q,s} = j) = A_{qj}$,
\[
\mathbb{E}[\widehat{AV}_q]
= \frac{1}{S} \sum_{s=1}^S \sum_{j=1}^{n_k} A_{qj} V_j
= \sum_{j=1}^{n_k} A_{qj} V_j
= (AV)_q.
\]
Linearity of expectation completes the result.
\end{proof}

\subsection{Variance of SANTA}
\label{santa-var}

\begin{proposition}[Variance scaling of SANTA]\label{prop:santa-var}
The covariance of the SANTA estimator scales as $1/S$.
\end{proposition}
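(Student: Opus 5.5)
The plan is to fix a query row $q$ and compute the covariance of $\widehat{AV}_q$ directly from the i.i.d. structure, then note that distinct query rows are sampled independently. For a single draw $i \sim \mathrm{Categorical}(A_q)$, define the random row vector $X := V_i \in \mathbb{R}^{1\times d_k}$. By the computation in Proposition~\ref{prop:santa-unbiased}, $\mathbb{E}[X] = \sum_j A_{qj} V_j = (AV)_q =: \mu_q$. I will write down its single-sample covariance
\[
\Sigma_q := \mathrm{Cov}(X) = \sum_{j=1}^{n_k} A_{qj}\,(V_j-\mu_q)^{\mathsf T}(V_j-\mu_q) = \sum_{j=1}^{n_k} A_{qj} V_j^{\mathsf T} V_j - \mu_q^{\mathsf T}\mu_q ,
\]
a $d_k\times d_k$ positive semidefinite matrix that depends only on $A_q$ and $V$, and not on $S$.

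Next, since $\widehat{AV}_q = \frac1S\sum_{s=1}^S X_s$ with $X_s = V_{i_{q,s}}$ i.i.d. copies of $X$, bilinearity of covariance gives $\mathrm{Cov}(\widehat{AV}_q) = \frac{1}{S^2}\sum_{s,s'}\mathrm{Cov}(X_s,X_{s'})$. Independence kills every cross term $s\neq s'$, so $\mathrm{Cov}(\widehat{AV}_q) = \Sigma_q/S$. Taking the trace yields the scalar mean-squared error
\[
\mathbb{E}\|\widehat{AV}_q-(AV)_q\|_2^2 = \tfrac1S\Big(\sum_j A_{qj}\|V_j\|_2^2 - \|(AV)_q\|_2^2\Big) \le \tfrac1S \max_j\|V_j\|_2^2 .
\]
This makes the $1/S$ scaling explicit and uniform in $n_k$. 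For the stacked estimator, rows $r\neq r'$ use independent samples, so the cross-row covariances vanish, and the full covariance (for example of $\mathrm{vec}(\widehat{AV})$) is block diagonal with blocks $\Sigma_r/S$. Hence it also scales as $1/S$.

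There is no real obstacle here. The only care needed is bookkeeping: states clearly that $\Sigma_q$ is independent of $S$ (so the statement ``scales as $1/S$'' has content), and treating vectors consistently as row vectors so the outer products are oriented correctly.
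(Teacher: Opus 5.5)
Your proposal is correct and follows the paper's proof essentially verbatim: fix a query, note that the draws $V_{i_{q,s}}$ are i.i.d.\ with covariance $\Sigma_q$ independent of $S$, so the cross terms vanish and $\mathrm{Cov}(\widehat{AV}_q)=\Sigma_q/S$, with the trace giving the mean-squared-error identity. Your extra remarks are correct additions the paper does not make: the uniform bound $\tfrac{1}{S}\max_j\|V_j\|_2^2$, and the block-diagonal covariance of the stacked estimator, which tacitly assumes independent sampling across query rows.
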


Throughout this subsection, fix a query index $q\in\{1,\dots,n_q\}$. Let $A_q\in\Delta^{n_k-1}$ denote the attention weights over keys, write $p_{qj} \triangleq A_{qj}$, and let $\mu_q \triangleq \sum_{j=1}^{n_k} p_{qj} V_j = (AV)_q$ be the exact attention output for that query. Default SANTA draws $i_{q,1},\ldots,i_{q,S} \stackrel{\text{i.i.d.}}{\sim}\mathrm{Categorical}(A_q)$ and returns
\[
\widehat{AV}_q \;=\; \frac{1}{S}\sum_{s=1}^S V_{i_{q,s}} \in \mathbb{R}^{d_k}.
\]
Define centered summands $X_s \triangleq V_{i_{q,s}} - \mu_q$ and the (query-specific) covariance
\(
\Sigma_q \triangleq \mathrm{Cov}(V_{i}) = \sum_{j=1}^{n_k} p_{qj}\,(V_j-\mu_q)(V_j-\mu_q)^{\!\top}.
\)

We now aim to show that SANTA's covariance scales as $1/S$ (Proposition~\ref{prop:santa-var}):
\[
\mathbb{E}[\widehat{AV}_q] = \mu_q, 
\qquad
\mathrm{Cov}(\widehat{AV}_q) \;=\; \frac{1}{S}\,\Sigma_q.
\]
Equivalently, $\mathbb{E}\!\left[\|\widehat{AV}_q-\mu_q\|_2^2\right] = \frac{1}{S}\,\mathrm{tr}(\Sigma_q)$.

\begin{proof}
Unbiasedness was shown in Proposition~\ref{prop:santa-unbiased}. For the variance, write $\widehat{AV}_q-\mu_q=\frac{1}{S}\sum_{s=1}^S X_s$ with $\mathbb{E}[X_s]=0$ and $\mathrm{Cov}(X_s)=\Sigma_q$. The $X_s$ are i.i.d., hence
\(
\mathrm{Cov}\!\left(\frac{1}{S}\sum_{s=1}^S X_s\right)=\frac{1}{S^2}\sum_{s=1}^S \mathrm{Cov}(X_s)=\frac{1}{S}\,\Sigma_q.
\)
Taking the trace gives the mean-squared error identity.
\end{proof}

\subsection{S$^2$ANTA Estimator}
\label{s2anta estimator}

\begin{proposition}[Unbiasedness of $S^2$ANTA]\label{prop:s2anta-unbiased}
Let $\mu_q := \sum_{j} p_{qj} V_j = (AV)_q$. Then
\(
\mathbb{E}[\widehat{AV}_q^{\mathrm{ind}}]
=
\mathbb{E}[\widehat{AV}_q^{\mathrm{sys}}]
=
\mu_q.
\)
\end{proposition}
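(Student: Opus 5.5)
The approach is to reduce both claims to one fact: each threshold $T_m$ is marginally uniform on its stratum $I_m$. Then the average of the marginal laws of the $J_m$ reproduces $A_q$ exactly. Unbiasedness only concerns expectations, so linearity of expectation lets us ignore the dependence between the $J_m$. This dependence is the only difference between the independent stratified and systematic schemes, so both cases are handled by the same computation.

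\textbf{Step 1 (inverse-CDF mass).} I would fix the convention $F_q^{-1}(t) = j$ iff $t \in [c_{j-1}, c_j)$, where $c_j := \sum_{l\le j} p_{ql}$ and $c_0 = 0$. The sets $C_j := [c_{j-1}, c_j)$ partition $[0,1)$, and $C_j$ has Lebesgue measure $p_{qj}$. Zero-mass keys give empty $C_j$ and cause no trouble.

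\textbf{Step 2 (marginals).} For the independent stratified scheme, $T_m \sim \mathrm{Unif}(I_m)$ by definition. For the systematic scheme, $T_m = U + m/S$ with $U \sim \mathrm{Unif}([0,1/S))$, and a translation of a uniform variable is uniform on the translated interval, so again $T_m \sim \mathrm{Unif}(I_m)$. In both cases
\[
\Pr(J_m = j) = \Pr(T_m \in C_j) = S\,\lambda(C_j \cap I_m).
\]

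\textbf{Step 3 (sum over strata).} By linearity,
\[
\mathbb{E}\Bigl[\tfrac{1}{S}\textstyle\sum_m V_{J_m}\Bigr] = \sum_j V_j \,\tfrac{1}{S}\sum_m S\,\lambda(C_j\cap I_m) = \sum_j V_j\,\lambda(C_j).
\]
The last equality uses the fact that the $I_m$ partition $[0,1)$. Since $\lambda(C_j) = p_{qj}$, the sum equals $\mu_q$.

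The only delicate point is Step 1: the convention for $F_q^{-1}$ at the jump points and for zero-probability keys. It affects only a measure-zero set of thresholds, so it does not change any probability. I would state the right-continuous convention explicitly and note this. No other obstacle is expected.
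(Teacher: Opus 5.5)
Your proposal is correct and follows essentially the same route as the paper. The paper also rests on each $T_m$ being marginally $\mathrm{Unif}(I_m)$ under both schemes, and uses linearity of expectation to set aside the cross-stratum dependence in the systematic case. It then collapses $\frac{1}{S}\sum_m \mathbb{E}[V_{F_q^{-1}(T_m)}]$ into $\int_0^1 \sum_j \mathbf{1}\{F_q(j-1)\le t<F_q(j)\}V_j\,dt=\sum_j p_{qj}V_j$, which is your Step 3 written as an integral rather than as a sum of $\lambda(C_j\cap I_m)$; your explicit treatment of the inverse-CDF convention and zero-mass keys adds a little rigor the paper leaves implicit.
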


The proof is shown in Section~\ref{s2anta-unbiased}.

\begin{remark}[Systematic vs.\ independent]
$S^2$ANTA-strat is unbiased and admits a variance-dominance theorem, guaranteeing lower variance than that of i.i.d.\ sampling with default SANTA (Section~\ref{variance-stratified}). $S^2$ANTA-sys is also unbiased, but does not provide tractable variance guarantees; empirically it performs comparably.
\end{remark}

\subsection{Unbiasedness of $S^2$ANTA}
\label{s2anta-unbiased}

\begin{proof}[Proof of Proposition~\ref{prop:s2anta-unbiased}]
For $S^2$ANTA-strat,
\begin{equation*}
\begin{aligned}
\mathbb{E}[\widehat{AV}_q^{\mathrm{ind}}]
&= \frac{1}{S}\sum_{m=0}^{S-1}\mathbb{E}\!\left[V_{F_q^{-1}(T_m)}\right] \\
&= \int_0^1 \Big(\sum_{j} \mathbf{1}\{F_q(j{-}1)\le t < F_q(j)\}\,V_j\Big)\,dt \\
&= \sum_{j} p_{qj} V_j \\
&= \mu_q.
\end{aligned}
\end{equation*}
For $S^2$ANTA-sys, observe that \emph{marginally} each $T_m$ is uniform on $I_m$ (though dependent across $m$). By linearity of expectation, the sum of expectations is unchanged by this dependence, so the same integral argument applies; see \citet[Ch.~8]{cochran1977sampling} and \citet{owen2013montecarlo}.
\end{proof}

\subsection{Variance Reduction for $S^2$ANTA-Strat}
\label{variance-stratified}

\begin{theorem}[Variance reduction for $S^2$ANTA-strat]\label{thm:s2anta-var}
Let $\Sigma_q := \sum_{j} p_{qj}(V_j-\mu_q)(V_j-\mu_q)^{\top}$ and define within-stratum covariances
\[
\Sigma_{q}^{(m)} := \mathrm{Cov}\!\left(V_{F_q^{-1}(T)} \,\middle|\, T\sim\mathrm{Unif}(I_m)\right).
\]
Then
\[
\mathrm{Cov}\!\bigl(\widehat{AV}_q^{\mathrm{ind}}\bigr)
= \frac{1}{S^2}\sum_{m=0}^{S-1}\Sigma_q^{(m)}
\ \leq_{\mathrm{Loewner}}\ \frac{1}{S}\,\Sigma_q
= \mathrm{Cov}(\widehat{AV}_q),
\]
with strict improvement unless all stratum means coincide. See \citet[Ch.~4]{lohr2010sampling} and \citet[Ch.~5]{cochran1977sampling}.
\end{theorem}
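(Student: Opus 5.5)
The plan is the classical law-of-total-covariance argument for proportional stratified sampling, carried out with the inverse-CDF parametrization used in the construction of $S^2$ANTA-strat. Write $g(t) := V_{F_q^{-1}(t)}$ for $t\in[0,1)$, so a default SANTA draw is $g(T)$ with $T\sim\mathrm{Unif}[0,1)$. Let $M$ be the index of the stratum containing $T$. Since every $I_m$ has length $1/S$, $M$ is uniform on $\{0,\dots,S-1\}$, and conditionally on $M=m$ the variable $T$ is $\mathrm{Unif}(I_m)$. Define the stratum means $\mu_q^{(m)} := \mathbb{E}[g(T)\mid T\in I_m]$. By the integral computation in Proposition~\ref{prop:s2anta-unbiased} they satisfy $\frac{1}{S}\sum_m \mu_q^{(m)} = \mu_q$.

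The first step is the equality $\mathrm{Cov}(\widehat{AV}_q^{\mathrm{ind}}) = \frac{1}{S^2}\sum_m \Sigma_q^{(m)}$. Here the $T_m$ are independent, so the summands $V_{J_m}=g(T_m)$ are independent. The covariance of the sum is therefore the sum of their covariances, each of which equals $\Sigma_q^{(m)}$ by definition. Dividing by $S^2$ gives the claim.

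The second step applies the law of total covariance to $g(T)$, conditioning on $M$. This gives
\[
\Sigma_q = \mathrm{Cov}(g(T)) = \frac{1}{S}\sum_{m}\Sigma_q^{(m)} + \frac{1}{S}\sum_m \bigl(\mu_q^{(m)}-\mu_q\bigr)\bigl(\mu_q^{(m)}-\mu_q\bigr)^{\top}.
\]
To justify $\Sigma_q=\mathrm{Cov}(g(T))$, note that $g(T)$ is distributed as $V_i$ with $i\sim\mathrm{Categorical}(A_q)$. That identification is the same inverse-CDF fact used in the unbiasedness proof. Dividing the decomposition by $S$ and combining it with Proposition~\ref{prop:santa-var}, which gives $\mathrm{Cov}(\widehat{AV}_q)=\Sigma_q/S$, yields
\[
\frac{1}{S}\Sigma_q-\mathrm{Cov}(\widehat{AV}_q^{\mathrm{ind}}) = \frac{1}{S^2}\sum_m \bigl(\mu_q^{(m)}-\mu_q\bigr)\bigl(\mu_q^{(m)}-\mu_q\bigr)^{\top}.
\]
The right-hand side is a sum of rank-one PSD matrices, which gives the Loewner inequality.

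The main point requiring care is the strictness clause. The gap matrix is zero exactly when every $\mu_q^{(m)}=\mu_q$, that is, when all stratum means coincide. Otherwise it is a nonzero PSD matrix, so the inequality is strict in the Loewner order: the difference is nonzero PSD, and its trace, which is the MSE gap, is positive. I will state this interpretation of ``strict'' explicitly, since strict positive-definiteness is not claimed and does not hold in general. A last minor check concerns strata that split an atom of the categorical distribution. This causes no difficulty, because $\Sigma_q^{(m)}$ and $\mu_q^{(m)}$ are defined through $T$ directly and the total-covariance identity needs no regularity beyond finite second moments, which holds trivially for finitely many $V_j$.
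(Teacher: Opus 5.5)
Your proposal is correct and follows essentially the same route as the paper's proof: independence across strata gives the equality, and the law of total covariance over the uniform stratum index shows the gap to $\frac{1}{S}\Sigma_q$ is the between-stratum term, which is PSD. Writing that gap explicitly as $\frac{1}{S^2}\sum_m(\mu_q^{(m)}-\mu_q)(\mu_q^{(m)}-\mu_q)^{\top}$, and noting that ``strict'' means a nonzero PSD gap with positive trace (not positive definiteness), makes the paper's terse strictness claim precise.
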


\begin{proof}
Independence across strata gives
\(
\mathrm{Cov}(\widehat{AV}_q^{\mathrm{ind}})=\frac{1}{S^2}\sum_{m} \mathrm{Cov}\!\left(V_{F_q^{-1}(T)}\mid T\in I_m\right).
\)
By the law of total covariance with $T\sim\mathrm{Unif}(0,1)$ and partition $\{I_m\}$,
\begin{equation*}
\begin{aligned}
\Sigma_q
&= \mathbb{E}\!\left[\mathrm{Cov}\!\left(V_{F_q^{-1}(T)} \mid T\in I_m\right)\right] \\
&\quad + \mathrm{Cov}\!\left(\mathbb{E}\!\left[V_{F_q^{-1}(T)} \mid T\in I_m\right]\right) \\
&\geq_{\mathrm{Loewner}}
   \mathbb{E}\!\left[\mathrm{Cov}\!\left(V_{F_q^{-1}(T)} \mid T\in I_m\right)\right].
\end{aligned}
\end{equation*}

and averaging then dividing by $S$ yields the claim. Strictness holds unless the stratum means are all equal.
\end{proof}

\begin{corollary}[MSE ordering]
For any $q$,
\(
\mathbb{E}\!\left[\|\widehat{AV}_q^{\mathrm{ind}}-\mu_q\|_2^2\right]
\le
\mathbb{E}\!\left[\|\widehat{AV}_q-\mu_q\|_2^2\right]
= \frac{1}{S}\,\mathrm{tr}(\Sigma_q).
\)
\end{corollary}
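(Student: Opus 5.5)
The corollary follows by taking traces in Theorem~\ref{thm:s2anta-var}. Both estimators are unbiased for $\mu_q$: Proposition~\ref{prop:santa-unbiased} covers default SANTA and Proposition~\ref{prop:s2anta-unbiased} covers $S^2$ANTA-strat. For any unbiased random vector $X$ with mean $\mu_q$, the identity $\mathbb{E}\|X-\mu_q\|_2^2=\mathrm{tr}(\mathrm{Cov}(X))$ holds, since $\|X-\mu_q\|_2^2=\mathrm{tr}\bigl((X-\mu_q)(X-\mu_q)^{\top}\bigr)$ and trace commutes with expectation. This rewrites both sides of the claimed inequality as traces of covariance matrices.

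Next I invoke Theorem~\ref{thm:s2anta-var}, which gives $\mathrm{Cov}(\widehat{AV}_q^{\mathrm{ind}})\le_{\mathrm{Loewner}}\frac{1}{S}\Sigma_q=\mathrm{Cov}(\widehat{AV}_q)$. The trace is monotone in the Loewner order. If $B-C\succeq 0$, then $\mathrm{tr}(B)-\mathrm{tr}(C)=\mathrm{tr}(B-C)=\sum_i e_i^{\top}(B-C)e_i\ge 0$, because every diagonal entry of a PSD matrix is nonnegative. Applying this yields $\mathrm{tr}\,\mathrm{Cov}(\widehat{AV}_q^{\mathrm{ind}})\le \mathrm{tr}\,\mathrm{Cov}(\widehat{AV}_q)$.

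For the final equality I cite Proposition~\ref{prop:santa-var}, which gives $\mathrm{Cov}(\widehat{AV}_q)=\frac{1}{S}\Sigma_q$ and hence the mean-squared error $\frac{1}{S}\mathrm{tr}(\Sigma_q)$.

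I do not expect a real obstacle; the argument is bookkeeping. The one point to check is that Theorem~\ref{thm:s2anta-var} genuinely gives Loewner dominance, so that the trace step is legitimate. Its proof supplies this: the law of total covariance shows that $\Sigma_q$ minus the averaged within-stratum covariance is the covariance of the stratum means, which is PSD. Dividing by $S$ preserves the ordering. That averaged within-stratum term is the uniform average $\frac1S\sum_m\Sigma_q^{(m)}$, because each stratum $I_m$ has mass exactly $1/S$; dividing it by $S$ gives exactly $\frac{1}{S^2}\sum_m\Sigma_q^{(m)}=\mathrm{Cov}(\widehat{AV}_q^{\mathrm{ind}})$.
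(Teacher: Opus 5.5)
Your proposal is correct and follows the same route as the paper, which gives no separate proof for this corollary and treats it as immediate from Theorem~\ref{thm:s2anta-var}: take traces of the Loewner inequality, use unbiasedness to identify each MSE with the trace of the corresponding covariance, and read off the equality $\frac{1}{S}\mathrm{tr}(\Sigma_q)$ from Proposition~\ref{prop:santa-var}. Your extra checks (trace monotonicity under $\succeq$, and that the within-stratum average is uniform because each $I_m$ has mass exactly $1/S$) simply make explicit steps the paper leaves implicit.
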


\section{Vector Bernstein Tail for SANTA}
\label{app:bernstein}

\begin{theorem}[Vector Bernstein tail for SANTA]\label{thm:vector-bernstein}
Assume a uniform almost-sure bound $\|V_j-\mu_q\|_2 \le V_{\max}$ for all $j$ (hence $\|X_s\|_2\le V_{\max}$). Let $v_q \triangleq \mathbb{E}\!\left[\|V_{i}-\mu_q\|_2^2\right]=\mathrm{tr}(\Sigma_q)$. Then for all $t>0$,
\[
\Pr\!\left(\bigl\|\widehat{AV}_q-\mu_q\bigr\|_2 \ge t\right)
\;\le\;
2\exp\!\left(
-\frac{S\,t^2}{2\,(v_q + V_{\max}t/3)}
\right).
\]
Equivalently, with probability at least $1-\delta$,
\[
\bigl\|\widehat{AV}_q-\mu_q\bigr\|_2
\;\le\;
\sqrt{\frac{2\,v_q\,\log(2/\delta)}{S}}
\;+\;
\frac{2V_{\max}\,\log(2/\delta)}{3S}.
\]

\end{theorem}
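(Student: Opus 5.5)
The plan is to reduce the statement to a dimension-free Bernstein inequality for sums of independent, bounded, mean-zero random vectors in a Hilbert space, apply it to the centered summands $X_s = V_{i_{q,s}}-\mu_q$, and then invert the tail bound to obtain the high-probability form.

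\textbf{Setup.} By Proposition~\ref{prop:santa-unbiased} and the setup of Proposition~\ref{prop:santa-var}, the $X_s$ are i.i.d. with $\mathbb{E}[X_s]=0$ and $\mathbb{E}\|X_s\|_2^2=\mathrm{tr}(\Sigma_q)=v_q$. By hypothesis $\|X_s\|_2\le V_{\max}$ almost surely. Write $M_S:=\sum_{s=1}^S X_s = S(\widehat{AV}_q-\mu_q)$. Then the event $\|\widehat{AV}_q-\mu_q\|_2\ge t$ is exactly the event $\|M_S\|_2\ge St$. The partial sums $M_k$ form a martingale in $\mathbb{R}^{d_k}$, with increments bounded by $V_{\max}$ and total conditional variance $\sum_s \mathbb{E}\|X_s\|_2^2 = S v_q$.

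\textbf{Key step.} I will invoke Pinelis's Bernstein inequality for martingales in $2$-smooth Banach spaces, specialized to Hilbert space where the smoothness constant is $1$ (Pinelis, 1994, Thm.~3.5). It gives
\[
\Pr(\|M_S\|_2\ge r)\le 2\exp\!\Bigl(-\frac{r^2}{2(\sigma^2+br/3)}\Bigr),\qquad \sigma^2=S v_q,\ b=V_{\max}.
\]
Substituting $r=St$ yields $2\exp\bigl(-S^2t^2/(2(Sv_q+V_{\max}St/3))\bigr)$, which simplifies to the claimed bound.

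If a self-contained argument is preferred, I would reproduce Pinelis's proof. It applies the function $\cosh(\lambda\|x\|_2)$ to the martingale $M_k$ and shows
\[
\mathbb{E}\cosh(\lambda\|M_S\|)\le\prod_s\bigl(1+\mathbb{E}[e^{\lambda\|X_s\|}-1-\lambda\|X_s\|]\bigr).
\]
This uses the Hilbert-space identity for the one-step expansion, in which the cross term vanishes because $\mathbb{E}X_s=0$. It then bounds $e^{u}-1-u\le \frac{u^2/2}{1-u/3}$ for $u=\lambda\|X_s\|\le\lambda V_{\max}<3$. Finally it applies Markov's inequality together with $\cosh(x)\ge e^x/2$, which produces the prefactor $2$, and optimizes over $\lambda$ in the standard Bernstein way.

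\textbf{Main obstacle.} The hardest step is obtaining a \emph{dimension-free} constant. A matrix Bernstein bound via Hermitian dilation would cost a factor $d_k+1$, so the $\cosh$-of-norm supermartingale argument is essential. Within that argument, the delicate point is the one-step inequality
\[
\mathbb{E}\cosh(\lambda\|x+X\|)\le\cosh(\lambda\|x\|)\,\bigl(1+\mathbb{E}[e^{\lambda\|X\|}-1-\lambda\|X\|]\bigr),
\]
which relies on the second-order smoothness of the Hilbert norm.

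\textbf{Inversion.} For the high-probability form, set the right-hand side equal to $\delta$ and let $L=\log(2/\delta)$. This gives the quadratic $St^2 - \tfrac{2}{3}V_{\max}Lt - 2v_qL = 0$. Its positive root is
\[
t^\star=\frac{\tfrac{2}{3}V_{\max}L+\sqrt{\tfrac{4}{9}V_{\max}^2L^2+8Sv_qL}}{2S}.
\]
Using $\sqrt{a+b}\le\sqrt a+\sqrt b$, we get $t^\star\le \sqrt{2v_qL/S}+\tfrac{2V_{\max}L}{3S}$. Monotonicity of the tail bound in $t$ then gives the stated deviation bound with probability at least $1-\delta$.
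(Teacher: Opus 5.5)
Your proposal is correct and follows the same route as the paper's proof sketch: a dimension-free Hilbert-space Bernstein inequality (Pinelis) applied to the i.i.d. centered summands, with variance proxy $Sv_q$ and increment bound $V_{\max}$, evaluated at $r=St$. Your $\cosh$-of-norm supermartingale argument and the quadratic inversion supply the details the paper leaves to citation; the inversion gives an implication rather than a literal equivalence, and your monotonicity step handles that correctly.
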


\begin{proof}[Proof sketch]
This is a Hilbert-space (vector) Bernstein inequality: apply Bernstein to the mean of independent, mean-zero, $L$-bounded vectors with variance proxy $v_q$; see, e.g., \citet[Thm.~2.10; see also Cor.~2.11]{boucheron2013concentration} and \citet{pinelis1994}.
\end{proof}

\section{Estimating $qK^\mathsf{T}$ via Bernoulli Sampling} \label{bernoulli_qk_methods}
We first normalize the query vector as $q\leftarrow q/\text{norm}$, where $\text{norm}\geq \max_i |q_i|$ so that each element $q_i\in[-1,1]$ is interpreted as a probability.
Then, we introduce the unbiased query estimator $\widehat{q}$ where each element $\widehat{q_i}$ is an independent ternary Bernoulli variable:
\begin{align}
    b_i &\sim \text{Bernoulli}(|q_i|), \\ \nonumber
    \widehat{q_i} &= b_i \times \text{sign}(q_i) \in \{-1,0,+1\}.
\end{align}
Sampling from $\widehat{q}$ provides an unbiased estimator for the product $p=qK^\mathsf{T}$:
\begin{equation}
    \widehat{p}=\frac{\text{norm}}{B}\sum_{n=1}^B\widehat{q}^{(n)} K^\mathsf{T},
\end{equation}
where $\widehat{q}^{(n)}$ is a Bernoulli query vector, and $B$ is the number of samples.

\subsection{Bernoulli $qK^\mathsf{T}$ Error} \label{appendix:bernoulli_error}

\begin{figure}[h]
  \centering
  \includegraphics[width=\linewidth]{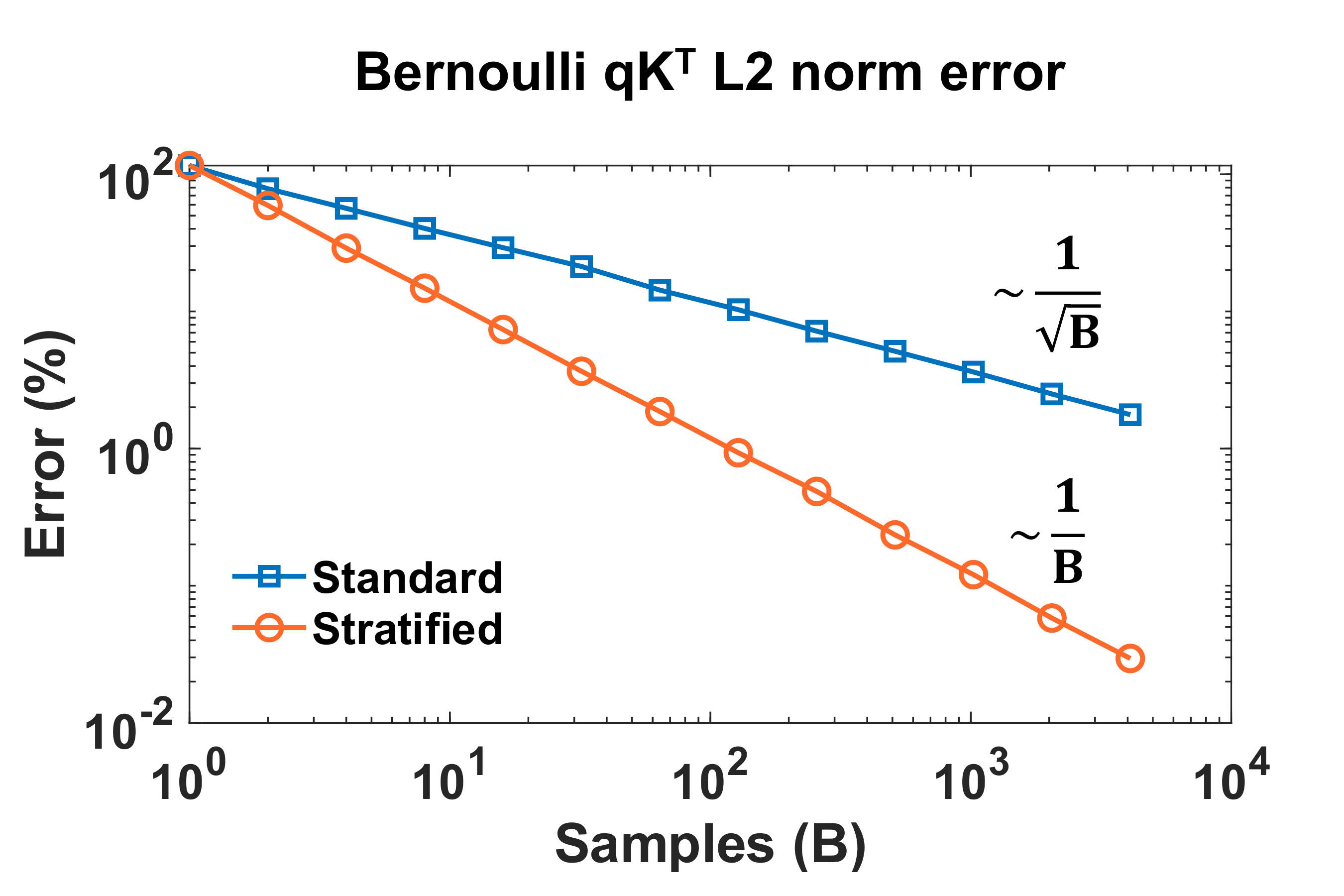}
  \caption{
  Mean L2 norm error for Bernoulli $qK^T$. The score error is calculated during decoding with $d_k=128$, a context length $n_k=1024$, and averaged over 100 instances $q\sim \mathcal{N}(0,1)$, $K\sim \mathcal{N}(0,1)/\sqrt d_k$, $V \sim \mathcal{N}(0,1)$. 
  }
  \label{fig:fig_qk}
\end{figure}

Fig.~\ref{fig:fig_qk} shows the scaling of the L2 norm error with the number of samples $B$, defined as $e=||\widehat{p}-p||_2/||p||_2$. In similar fashion as SANTA, stratified Bernoulli sampling reduces the variance of each independent element $\widehat{p_i}$. The variance scales as $O(1/B^2)$, resulting in an overall L2 norm scaling as $O(1/B)$. For this numerical experiment, which uses a head dimension $d_k=128$ and a context length $n_k=1024$, using only $B=4$ samples induces a mean L2 norm error of approximately $60\%$ for standard Bernoulli sampling, compared to a much lower $30\%$ for stratified sampling.

\subsection{$qK^\mathsf{T}$ for Grouped Query Attention} \label{methods_qk_GCA}
To maintain sparse key access when keys share multiple queries, as in GQA, we define a common representative query for the group, such as the mean of the absolute values of the queries:

\begin{align}
    m&=\frac{1}{|\mathcal{G}|}\sum_{g\in \mathcal{G}}|q_{g}|,
\end{align}
where $\mathcal{G}$ denotes the query group. We then estimate the mean query using Monte Carlo sampling, after normalizing by $\text{norm}\geq \max_i m_i$:
\begin{align}
    b_i &\sim \text{Bernoulli}(m_i/\text{norm}) \\
    \widehat{m}&=\frac{\text{norm}}{B}\sum_{n=1}^Bb^{(n)}
\end{align}
where $b^{(n)}$ are Bernoulli query vectors with independent elements $b_i$.
After fetching the corresponding $K^{\mathsf{T}}$ rows corresponding to non-zero elements of $\widehat{m}$, the product $p=qK^\mathsf{T}$ for each query is estimated as:
\begin{equation}
    \widehat{p}=\big(\widehat{m}\odot \frac{q}{m}\big)K^{\mathsf{T}},
\end{equation}
where $\odot$ denotes element-wise multiplication. In this mean group query scenario, calculating $\widehat{p}$ requires multiplications by the original query $q$ and normalization by $m$ to compensate for the bias introduced by the mean query approximation.

\section{Empirical Variance}
\label{empirical variance}

\begin{figure}[h]
  \centering
  \includegraphics[width=\linewidth]{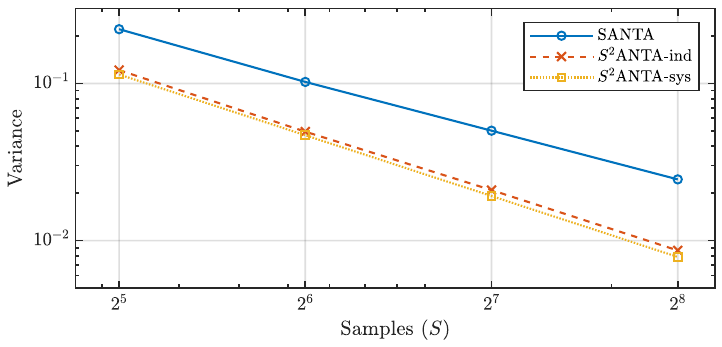}
  \caption{Empirical variance of SANTA on single-hop QA prompts (8k tokens).}
  \label{fig:empirical_variance}
\end{figure}

To further validate our results, we empirically measure the variance of SANTA, $S^2$ANTA-strat, and $S^2$ANTA-sys. We sweep the sample budget $S$ on 8k-token single-hop QA prompts, averaged across model layers and 30 prompts (normalized per head). $S^2$ANTA variants exhibit significantly lower variance than SANTA. Though systematic sampling admits no tractable theoretical variance guarantees, it empirically demonstrates similar variance to independent stratified sampling (and requires 1 random number per query instead of $S$ random numbers). On a log-log scale, slopes for SANTA, $S^2$ANTA-strat, and $S^2$ANTA-sys are $-1.1$, $-1.27$, and $-1.29$, respectively, matching the $1/S$ scaling prescribed by theory (see Appendix~\ref{app:impl-measure} for details regarding the measurement protocol). 

\section{Theoretical Operations and Memory Accesses in Autoregressive Decoding}
\label{Theoretical operations and memory accesses in autoregressive decoding}
\begin{table}[h]
  \caption{Per-query, per-head value-stage compute and memory for dense scaled dot-product attention (SDPA) and SANTA. $n_k$ is the number of keys, $d_k$ is the head dimension, and $S$ is the SANTA sample budget per query. Value reads counts $V$ elements accessed per query; Writes counts output elements.}
  \label{tab:value-stage-per-query}
  \centering
  \small
  \begin{tabular}{lcc}
    \toprule
    Metric & SDPA & SANTA \\
    \midrule
    Adds & $n_k d_k$ & $S d_k$ \\
    Mults / Divs & $n_k d_k$ & $d_k$ \\
    Value reads & $n_k d_k$ & $S d_k$ \\
    Writes & $d_k$ & $d_k$ \\
    \bottomrule
  \end{tabular}
  \label{theoretical_flops_and_mem}
\end{table}

Table~\ref{theoretical_flops_and_mem} displays the theoretical additions, multiplications, and memory accesses incurred by SANTA in the value stage of attention, where the attention scores $A$ multiply $V$. The score computation $qK^\mathsf{T}$ is similar for both SANTA and SDPA, and is therefore omitted in this analysis (see Appendix~\ref{prefill theoretical ops and mem access}). We count individual scalar elements which are added together, multiplied together, or accessed by memory. Architecture-dependent effects such as multiply-accumulate (MAC) instructions, caching effects, or operation parallelism are not accounted for; arguments are fundamental and hardware-agnostic.

By virtue of SANTA's sparsity, SANTA demonstrates a $S/n_k$ reduction in post-softmax additions and reads to rows of $V$, where $S$ is the sample budget and $n_k$ is the number of keys (sequence length). A key distinction is that SANTA costs \textit{fewer} additions than SDPA; SANTA does not seek to replace the multiplications in the $AV$ computation, but rather, leverages sparse sampling to decrease the total number of additions while eliminating multiplications. So long as $S \ll n_k$, the advantage of attention sparsity holds true. Sampling overhead is not included; it is highly dependent on implementation, but drawing $S \ll n_k$ samples should be lightweight relative to memory accesses, multiplies, and adds (indeed, this proves to be the case with GPU kernel latency demonstrations in Section~\ref{Hardware implementation and latency}).

Whereas SDPA incurs $n_k d_k$ multiplications from computing dot products, SANTA only costs $d_k$ normalizing multiplies by $1/S$. $d_k$ is a constant model parameter and does not increase with the sequence length $n_k$. Furthermore, if $S$ is chosen as a power of 2 and the model has a fixed point representation, this normalization may be implemented as a bit-shift. 

In prefill, while similar arguments hold for SANTA's reduced additions and multiplies, there is no longer sparse memory access to the value matrix $V$. Each of $n_q$ queries may each sample $S$ rows of $V$, therefore requiring $\gg S$ rows of $V$ when the union across all queries is considered (see Appendices~\ref{app:unique-v-rows},~\ref{prefill theoretical ops and mem access}).

\section{Unique $V$-Row Coverage for SANTA Prefill}
\label{app:unique-v-rows}

\begin{table}[h]
  \centering
  \caption{Unique $V$ rows read (union over all query positions in a causal prefill pass), reported as a percentage of all $n_k$ rows in the value matrix. Each entry is the mean over 10 trials and all attention heads, with Gaussian inputs $Q,K,V\sim\mathcal{N}(0,1)$ (FP16), $H{=}32$ heads, and head dimension $d_k{=}128$.}
  \label{tab:unique_v_rows_pct}
  \scriptsize
  \setlength{\tabcolsep}{3pt}
  \renewcommand{\arraystretch}{1.15}
  \begin{tabular}{rccccc}
    \toprule
    \multirow{2}{*}{$S$} & \multicolumn{5}{c}{\textbf{Unique $V$ rows read} (\% of $n_k$)} \\
    \cmidrule(lr){2-6}
      & $n_k{=}1024$ & $2048$ & $4096$ & $8192$ & $16384$ \\
    \midrule
     1  & 49.90 & 50.04 & 49.96 & 49.95 & 49.95 \\
     4  & 79.95 & 80.00 & 79.96 & 79.96 & 79.95 \\
     8  & 88.92 & 88.83 & 88.86 & 88.86 & 88.85 \\
    16  & 94.07 & 94.11 & 94.10 & 94.09 & 94.08 \\
    \bottomrule
  \end{tabular}
\end{table}

Table~\ref{tab:unique_v_rows_pct} shows the percentage of unique $V$ rows read by SANTA during prefill passes of differing sequence lengths $n_k$. Standard Gaussian distributed inputs $Q, K, V$ are passed into SANTA with $d_k = 128$ and $H = 32$ heads for multi-headed attention. Even in the long-context, $S \ll n_k$ regime, any appreciable number of samples $S$ results in SANTA sampling most rows of $V$ when the union across all queries is considered. For $S = 16$, selected sequence lengths from $n_k = 1k-16k$ show SANTA reading almost all ($>94\%$) rows of $V$, because each of $n_q = n_k$ queries stochastically attend to $S$ sampled keys. Therefore, while SANTA features sparse $V$ access during autoregressive decoding, SANTA does \textit{not} have apparent benefit from sparse memory access in prefill. Arguments concerning SANTA's reduced addition and multiplication operations are still valid in prefill (Table~\ref{tab:full-flops-mem-per-head}).

\section{Theoretical Operations and Memory Accesses in Prefill}
\label{prefill theoretical ops and mem access}

\begin{table}[h]
  \caption{
    Per-head FLOPs and memory for scaled dot-product attention (SDPA), top-$k$, and SANTA.
    $n_q$: number of queries, $n_k$: number of keys, $d_k$: head dimension, $k$: top-$k$ budget, $S$: SANTA sample budget per query.
    ``Reads'' count logical value/key elements accessed; ``Writes'' count output or score elements.
    Softmax, top-$k$ selection, and sampling overheads are omitted.
  }
  \label{tab:full-flops-mem-per-head}
  \centering
  \scriptsize
  \setlength{\tabcolsep}{3pt}
  \renewcommand{\arraystretch}{1.15}
  \begin{tabular}{p{0.34\linewidth}cccc}
    \toprule
    \textbf{Stage / Variant} & \textbf{Adds} & \textbf{Mults / Divs} & \textbf{Reads} & \textbf{Writes} \\
    \midrule
    \multicolumn{5}{l}{\textbf{Score stage} ($QK^{\mathsf T}$)} \\
    SDPA / top-$k$ / SANTA &
      $n_q n_k d_k$ &
      $n_q n_k d_k$ &
      $(n_q + n_k) d_k$ &
      $n_q n_k$ \\
    \midrule
    \multicolumn{5}{l}{\textbf{Value stage}} \\
    SDPA &
      $n_q n_k d_k$ &
      $n_q n_k d_k$ &
      $n_q n_k d_k$ &
      $n_q d_k$ \\
    Top-$k$ &
      $n_q k d_k$ &
      $n_q k d_k$ &
      $n_q k d_k$ &
      $n_q d_k$ \\
    SANTA &
      $n_q S d_k$ &
      $n_q d_k$ &
      $n_q S d_k$ &
      $n_q d_k$ \\
    \bottomrule
  \end{tabular}
\end{table}

While SANTA's apparent memory sparsity benefits are most viable in  autoregressive decoding, for full clarity, this section provides generalized analysis for prefill. We consider the computational cost of SANTA, SDPA, and top-k attention, focusing on FLOPs and memory traffic in the value stage. Table~\ref{tab:full-flops-mem-per-head} summarizes the theoretical costs per attention head. For the score stage ($QK^{\mathsf T}$), the ``Reads'' entry $(n_q + n_k)d_k$
counts each query and key vector once per head, assuming they are loaded once
from DRAM and then reused in on-chip memory during the matmul. In contrast, for
the value stage we report $n_q n_k d_k$, $n_q k d_k$, and $n_q S d_k$ for dense
SDPA, top-$k$, and SANTA, respectively, which count the per-query accesses rather than
unique loads of elements from memory.

While top-$k$ attention is memory-efficient and generally performs well, it introduces bias and can degrade significantly when the attention distribution is heavy-tailed. For example, recent work~\citep{chen2024magicpig} shows that tasks such as common word extraction suffer when relevant information is not concentrated in the top-$k$ entries. In contrast, stochastic estimators like SANTA provide an unbiased estimate of full attention and retain robustness in such cases.

For both top-k and SANTA, we define a notion of compute budget, where $k$ is the number of keys selected by top-k for each query of each attention head, while $S$ is a sample budget representing the number of samples that SANTA draws for each query of each attention head. By default, we employ uniform $S$ across model layers, though ablations suggest that $S$ can be optimized per layer (Appendix~\ref{app:one-hot-ablations},~\ref{app:rl}).

We omit softmax computation, top-$k$ sorting overhead, and sampling from this analysis, as they are lightweight relative to memory accesses and the $V$ matrix multiply, and are highly implementation-dependent. For example, many implementations use partial sorting~\citep{yerram2024hire, xie2024rtop}. 

In both FLOPs and memory reads, SDPA scales as $n_q n_k$, reflecting the cost of full quadratic attention computation. Both top-$k$ and SANTA reduce this to $n_q k$ and $n_q S$, respectively, assuming fixed budgets with $k, S \ll n_k$. This shifts the overall complexity of the value stage from quadratic to linear in sequence length, given a fixed budget $k$ or $S$.

Top-$k$ still incurs multiplication costs while SANTA eliminates multiplies. Top-$k$ performs $n_q k d_k$ multiplications (per head), while SANTA requires only $n_q d_k$ divisions for normalization (if $S$ is chosen as a power of 2, even these few divisions become simple bit-shifts).

\section{Additional DeepSeek and Top-k Results}
\label{app:deepseek-results}

\begin{table}[h]
  \caption{GSM8K accuracy and average context length (prompt + answer)
  for DeepSeek-R1-Distill-Qwen-7B. $k$: number of keys in top-$k$, $S$: SANTA sample budget. Accuracy shows 95\% bootstrap confidence intervals.}
  \label{tab:gsm8k-deepseek}
  \centering
  \scriptsize
  \renewcommand{\arraystretch}{1.15}
  \setlength{\tabcolsep}{2pt}
  \begin{tabular*}{\columnwidth}{@{\extracolsep{\fill}} r cc cc}
    \toprule
      & \multicolumn{2}{c}{Top-$k$} & \multicolumn{2}{c}{SANTA} \\
      \cmidrule(lr){2-3}\cmidrule(lr){4-5}
    $k|S$ & Acc. (\%) & Tok. & Acc. (\%) & Tok. \\
    \midrule
      2   & 0.23$\pm$0.15 & 4049 & 0.00$\pm$0.00 & 4217 \\
      4   & 23.88$\pm$1.31 & 3298 & 0.08$\pm$0.09 & 4208 \\
      8   & 72.25$\pm$1.35 & 2332 & 52.46$\pm$1.57 & 2799 \\
     16   & 82.51$\pm$1.21 & 1860 & 79.00$\pm$1.28 & 1778 \\
     32   & 86.23$\pm$1.02 & 1694 & 83.19$\pm$1.23 & 1667 \\
     64   & 86.76$\pm$1.03 & 1624 & 84.99$\pm$1.09 & 1587 \\
    128   & 87.54$\pm$1.02 & 1626 & 85.42$\pm$1.00 & 1594 \\
    256   & 88.20$\pm$0.97 & 1600 & 86.93$\pm$1.01 & 1627 \\
    \midrule
    \multicolumn{2}{l}{SDPA (baseline)} & \multicolumn{1}{c}{} & \textbf{88.75$\pm$1.00} & 1532 \\
    \bottomrule
  \end{tabular*}
\end{table}

\begin{table}[h]
  \caption{GSM8K accuracy and average context length (prompt + answer)
  for Llama-3.1-8B-Instruct.}
  \label{tab:gsm8k-llama-appendix}
  \centering
  \scriptsize
  \renewcommand{\arraystretch}{1.15}
  \setlength{\tabcolsep}{1.5pt}
  \begin{tabular*}{\columnwidth}{@{\extracolsep{\fill}} r cc cc cc cc}
    \toprule
      & \multicolumn{2}{c}{Top-$k$} & \multicolumn{2}{c}{SANTA} & \multicolumn{2}{c}{$S^2$ANTA-strat} & \multicolumn{2}{c}{$S^2$ANTA-sys} \\
      \cmidrule(lr){2-3}\cmidrule(lr){4-5}\cmidrule(lr){6-7}\cmidrule(lr){8-9}
    $k|S$ & Acc. (\%) & Tok. & Acc. (\%) & Tok. & Acc. (\%) & Tok. & Acc. (\%) & Tok. \\
    \midrule
      2   & 1.21$\pm$0.32 &  430 & 1.26$\pm$0.34 & 1075 & 1.01$\pm$0.32 & 1071 & 1.11$\pm$0.34 & 1071 \\
      4   & 6.27$\pm$0.72 &  583 & 1.57$\pm$0.37 &  825 & 1.54$\pm$0.38 &  611 & 1.67$\pm$0.40 &  569 \\
      8   & 41.32$\pm$1.50 &  549 & 1.44$\pm$0.38 &  207 & 1.74$\pm$0.40 &  325 & 2.10$\pm$0.45 &  340 \\
     16   & 62.88$\pm$1.57 &  430 & 5.51$\pm$0.72 &  350 & 39.12$\pm$1.50 &  352 & 44.63$\pm$1.58 &  349 \\
     32   & 72.50$\pm$1.34 &  384 & 38.26$\pm$1.43 &  348 & 67.00$\pm$1.48 &  343 & 68.59$\pm$1.42 &  339 \\
     64   & 76.12$\pm$1.32 &  358 & 63.63$\pm$1.49 &  346 & 74.43$\pm$1.31 &  343 & \textbf{76.42$\pm$1.34} &  341 \\
    128   & 77.13$\pm$1.30 &  349 & 70.23$\pm$1.42 &  341 & 75.64$\pm$1.33 &  341 & \textbf{77.33$\pm$1.34} &  342 \\
    256   & 78.19$\pm$1.33 &  340 & 75.61$\pm$1.42 &  342 & \textbf{78.17$\pm$1.28} &  343 & \textbf{77.56$\pm$1.34} &  343 \\
    \midrule
    \multicolumn{2}{l}{SDPA (baseline)} & \multicolumn{5}{c}{} & \textbf{78.06$\pm$1.33} & 344 \\
    \bottomrule
  \end{tabular*}
\end{table}

\begin{table}[h]
  \caption{MMLU accuracy and average context length (prompt + answer)
  for DeepSeek-R1-Distill-Qwen-7B. $k$: number of keys in top-$k$, $S$: SANTA sample budget. Accuracy shows 95\% bootstrap confidence intervals.}
  \label{tab:mmlu-deepseek}
  \centering
  \scriptsize
  \renewcommand{\arraystretch}{1.15}
  \setlength{\tabcolsep}{2pt}
  \begin{tabular*}{\columnwidth}{@{\extracolsep{\fill}} r cc cc}
    \toprule
      & \multicolumn{2}{c}{Top-$k$} & \multicolumn{2}{c}{SANTA} \\
      \cmidrule(lr){2-3}\cmidrule(lr){4-5}
    $k|S$ & Acc. (\%) & Tok. & Acc. (\%) & Tok. \\
    \midrule
      2   & 24.97$\pm$1.24 & 3598 & 4.33$\pm$0.58 & 4178 \\
      4   & 38.93$\pm$1.32 & 2236 & 23.10$\pm$1.20 & 3666 \\
      8   & 55.08$\pm$1.47 & 1659 & 44.55$\pm$1.47 & 2080 \\
     16   & 59.07$\pm$1.36 & 1301 & 58.81$\pm$1.42 & 1506 \\
     32   & 60.77$\pm$1.40 & 1173 & 61.05$\pm$1.45 & 1228 \\
     64   & 62.03$\pm$1.38 & 1103 & 60.59$\pm$1.43 & 1104 \\
    128   & 61.68$\pm$1.49 & 1061 & \textbf{62.38$\pm$1.32} & 1075 \\
    256   & 62.05$\pm$1.46 & 1041 & \textbf{62.27$\pm$1.39} & 1074 \\
    \midrule
    \multicolumn{2}{l}{SDPA (baseline)} & \multicolumn{1}{c}{} & \textbf{63.16$\pm$1.45} & 1020 \\
    \bottomrule
  \end{tabular*}
\end{table}

\begin{table}[h]
  \caption{MMLU accuracy and average context length (prompt + answer)
  for Llama-3.1-8B-Instruct.}
  \label{tab:mmlu-llama-appendix}
  \centering
  \scriptsize
  \renewcommand{\arraystretch}{1.15}
  \setlength{\tabcolsep}{1.5pt}
  \begin{tabular*}{\columnwidth}{@{\extracolsep{\fill}} r cc cc cc cc}
    \toprule
      & \multicolumn{2}{c}{Top-$k$} & \multicolumn{2}{c}{SANTA} & \multicolumn{2}{c}{$S^2$ANTA-strat} & \multicolumn{2}{c}{$S^2$ANTA-sys} \\
      \cmidrule(lr){2-3}\cmidrule(lr){4-5}\cmidrule(lr){6-7}\cmidrule(lr){8-9}
    $k|S$ & Acc. (\%) & Tok. & Acc. (\%) & Tok. & Acc. (\%) & Tok. & Acc. (\%) & Tok. \\
    \midrule
      2   & 23.56$\pm$1.20 &  588 & 24.89$\pm$1.27 & 1144 & 24.28$\pm$1.21 & 1141 & 24.52$\pm$1.22 & 1141 \\
      4   & 28.37$\pm$1.27 &  359 & 25.13$\pm$1.20 &  961 & 24.65$\pm$1.26 &  807 & 24.49$\pm$1.18 &  774 \\
      8   & 39.54$\pm$1.43 &  532 & 25.06$\pm$1.23 &  284 & 25.63$\pm$1.23 &  286 & 24.47$\pm$1.23 &  292 \\
     16   & 45.26$\pm$1.37 &  484 & 26.24$\pm$1.26 &  313 & 32.46$\pm$1.32 &  350 & 34.14$\pm$1.39 &  353 \\
     32   & 49.49$\pm$1.44 &  448 & 33.81$\pm$1.34 &  354 & 43.78$\pm$1.50 &  398 & 45.48$\pm$1.45 &  403 \\
     64   & 48.33$\pm$1.37 &  433 & 41.11$\pm$1.40 &  397 & \textbf{49.25$\pm$1.48} &  415 & \textbf{48.70$\pm$1.39} &  418 \\
    128   & 49.47$\pm$1.44 &  418 & 47.46$\pm$1.45 &  412 & \textbf{49.92$\pm$1.47} &  421 & \textbf{50.60$\pm$1.37} &  421 \\
    256   & 49.75$\pm$1.43 &  421 & \textbf{49.75$\pm$1.51} &  417 & \textbf{50.90$\pm$1.49} &  422 & \textbf{51.12$\pm$1.46} &  421 \\
    \midrule
    \multicolumn{2}{l}{SDPA (baseline)} & \multicolumn{5}{c}{} & \textbf{49.86$\pm$1.47} & 424 \\
    \bottomrule
  \end{tabular*}
\end{table}

\begin{table}[h]
  \caption{Accuracy of Llama 8B on 4 long context tasks with an 8k-token prompt. Tasks: frequent word extraction (FWE), needle-in-a-haystack (NIAH), single-hop QA (QA$_1$), multi-hop QA (QA$_2$). $k$: number of keys for top-k. $S$: SANTA sample budget. Accuracy shows 95\% bootstrap CIs.}
  \label{tab:ruler-llama-appendix}
  \centering
  \scriptsize
  \setlength{\tabcolsep}{3pt}
  \renewcommand{\arraystretch}{1.15}
  \begin{tabular}{l r r r r r}
    \toprule
    Kernel & $k|S$ & FWE & NIAH & QA$_1$ & QA$_2$ \\
    \midrule
    $S^2$ANTA-sys & 64  & 96.27$\pm$0.97 & 94.05$\pm$1.07 & 71.80$\pm$3.90 & 67.20$\pm$4.20 \\
    $S^2$ANTA-sys & 128 & 97.80$\pm$0.70 & 94.15$\pm$1.02 & 68.20$\pm$4.10 & 66.80$\pm$4.00 \\
    $\bm{S^2}$\textbf{ANTA-sys} & \textbf{256} & \textbf{97.87$\pm$0.70} & \textbf{95.00$\pm$0.95} & \textbf{71.20$\pm$3.90} & \textbf{69.40$\pm$4.10} \\
    \midrule
    $S^2$ANTA-strat & 64  & 95.93$\pm$0.94 & 94.35$\pm$1.15 & 71.40$\pm$4.00 & 65.40$\pm$4.20 \\
    $S^2$ANTA-strat & 128 & 97.73$\pm$0.80 & 94.30$\pm$1.10 & 70.80$\pm$4.20 & 67.40$\pm$4.20 \\
    $\bm{S^2}$\textbf{ANTA-strat} & \textbf{256} & \textbf{98.33$\pm$0.60} & \textbf{94.55$\pm$1.05} & \textbf{71.20$\pm$4.00} & \textbf{67.00$\pm$4.20} \\
    \midrule
    SANTA & 64  & 92.53$\pm$1.27 & 93.95$\pm$1.30 & 64.60$\pm$4.20 & 63.40$\pm$4.20 \\
    SANTA & 128 & 94.53$\pm$1.13 & 91.45$\pm$1.38 & 67.80$\pm$4.10 & 63.40$\pm$4.30 \\
    SANTA & 256 & 96.20$\pm$0.93 & 93.15$\pm$1.18 & 68.00$\pm$4.10 & 66.40$\pm$4.10 \\
    \midrule
    \textbf{SDPA (baseline)} & \textbf{---} &
    \textbf{98.53$\pm$0.60} & \textbf{94.55$\pm$1.00} &
    \textbf{71.80$\pm$3.90} & \textbf{68.60$\pm$4.20} \\
    \midrule
    top-$k$ & 64  & 93.07$\pm$1.20 & 92.55$\pm$1.10 & 65.60$\pm$4.10 & 60.80$\pm$4.10 \\
    top-$k$ & 128 & 95.40$\pm$0.96 & 93.60$\pm$1.15 & 69.20$\pm$3.90 & 62.20$\pm$4.50 \\
    \textbf{top-$k$} & \textbf{256} & \textbf{97.27$\pm$0.80} & \textbf{94.55$\pm$1.05} & \textbf{70.60$\pm$3.90} & \textbf{64.20$\pm$4.10} \\
    \bottomrule
  \end{tabular}
\end{table}

Tables~\ref{tab:gsm8k-deepseek}, ~\ref{tab:gsm8k-llama-appendix}, ~\ref{tab:mmlu-deepseek}, and ~\ref{tab:mmlu-llama-appendix} feature additional GSM8K and MMLU results with comparisons using DeepSeek-R1-Distill-Qwen-7B and top-k attention. Table ~\ref{tab:ruler-llama-appendix} exhibits long-context benchmark results identical to Table~\ref{tab:ruler-llama}, except with additional top-k comparisons.

Comparing SANTA and top-k requires care, as they have different computational profiles. We present results for sample budget $S$ equal to the top-$k$ budget $k$ not as an iso-compute benchmark, but to demonstrate that SANTA achieves comparable or superior accuracy while using a fundamentally cheaper set of operations. At $S=k$: top-$k$ costs $n_q k d_k$ multiplications + $n_q k d_k$ additions, whereas SANTA costs 0 multiplications + $n_q S d_k$ additions (Table~\ref{tab:full-flops-mem-per-head}).

A 32-bit FP multiply costs 3.7 pJ versus 0.9 pJ for addition~\citep{6757323}. For equal budgets $k=S$, the value-stage energy consumption for the two methods are approximately: $k \times (3.7 + 0.9) = 4.6k$ pJ per element for top-k, vs.\ $S \times 0.9 = 0.9S$ pJ per element for SANTA. Relative to top-k, SANTA yields an $\approx 5\times$ energy reduction for value-stage FLOPs. We acknowledge that these are approximate energy estimates; we aim to provide hardware-agnostic comparisons, as modern GPU hardware is optimized for matmuls and contiguous memory access, while SANTA involves adds and sparse memory access.

Regarding memory accesses: SANTA's accesses to elements of the $V$ matrix scale as $n_q S d_k$, which \textit{in the worst case} matches top-k's $n_q k d_k$ memory accesses at $k = S$. However, since SANTA samples with replacement, the number of \textit{unique} keys is $\leq S$. In practice, as we quantify in Appendix~\ref{app:unique-keys}, the number of unique keys sampled is $< S$, which may offer significant caching advantages.

For benchmarking, we apply SANTA and top-k to both prefill and autoregressive decoding. However, in practice, these sparse attention implementations may see greater benefit during decoding, when only a single query sparsely selects rows of $V$.

SANTA variants demonstrate strong performance relative to top-k on GSM8K and MMLU benchmarks, despite SANTA requiring no multiplications and a similar number of additions and memory accesses for $k = S$. However, the most remarkable result is shown in Table~\ref{tab:ruler-llama-appendix}, where $S^2$ANTA variants outperform top-k across virtually every long-context task for all $k = S$. $S^2$ANTA's lead over top-k is most apparent for multi-hop question answering (QA$_2$), where $S^2$ANTA variants lead in performance by $\approx5\%$ in several cases. It is plausible that long context tasks --- multi-hop QA in particular --- have information that is not concentrated in the top-k keys, and therefore SANTA's unbiased estimate of attention outperforms top-k, which drops information and cannot attend to keys outside of the $k$ largest attention scores.

\FloatBarrier
\section{Additional LongBench v2 Results}
\label{app:longbench-v2}

Table~\ref{tab:longbench-v2-full} reports the full LongBench v2 budget sweep corresponding to the compact operating-point summary in Table~\ref{tab:longbench-v2-main}. Accuracy is reported with 95\% bootstrap confidence intervals. Results are averaged over three runs; prompts longer than 32k tokens are truncated to 32k following the repository protocol. The main operating points, $S^2$ANTA-prop with $S{=}128$ and $S^2$ANTA-flash with $S{=}2048$, match SDPA within bootstrap confidence intervals in this 30k-token regime. Lower-budget $S^2$ANTA-flash settings degrade substantially, consistent with the sample-waste behavior discussed in Section~\ref{subsec:kernel-results-32k}, while $S^2$ANTA-prop remains near the SDPA baseline across the tested budgets.

\begin{table}[h]
  \caption{Full LongBench v2 sweep for Llama 8B. $S$: sample budget. Accuracy shows 95\% bootstrap confidence intervals.}
  \label{tab:longbench-v2-full}
  \centering
  \scriptsize
  \setlength{\tabcolsep}{4pt}
  \renewcommand{\arraystretch}{1.15}
  \begin{tabular}{l r r r}
    \toprule
    Kernel & $S$ & Acc. (\%) & Avg. scored tok. \\
    \midrule
    $S^2$ANTA-flash & 128  & 8.748$\pm$2.964  & 30035.5 \\
    $S^2$ANTA-flash & 256  & 14.049$\pm$3.504 & 30024.0 \\
    $S^2$ANTA-flash & 512  & 26.574$\pm$3.740 & 29990.0 \\
    $S^2$ANTA-flash & 1024 & 27.170$\pm$1.509 & 29994.2 \\
    $\bm{S^2}$\textbf{ANTA-flash} & \textbf{2048} &
    \textbf{28.827$\pm$3.422} & \textbf{29996.4} \\
    \midrule
    $S^2$ANTA-prop & 64   & 29.092$\pm$2.534 & 29997.5 \\
    $\bm{S^2}$\textbf{ANTA-prop} & \textbf{128} &
    \textbf{28.098$\pm$1.028} & \textbf{29998.4} \\
    $S^2$ANTA-prop & 256  & 28.694$\pm$1.029 & 29995.4 \\
    $S^2$ANTA-prop & 1024 & 28.562$\pm$1.588 & 29996.2 \\
    \midrule
    \textbf{SDPA (baseline)} & \textbf{---} &
    \textbf{28.363$\pm$1.509} & \textbf{29996.0} \\
    \bottomrule
  \end{tabular}
\end{table}
\FloatBarrier

\section{Ablation Study: One-Hot Attention Samples}
\label{app:one-hot-ablations}

\begin{figure}[h]
  \centering
  \includegraphics[width=\linewidth]{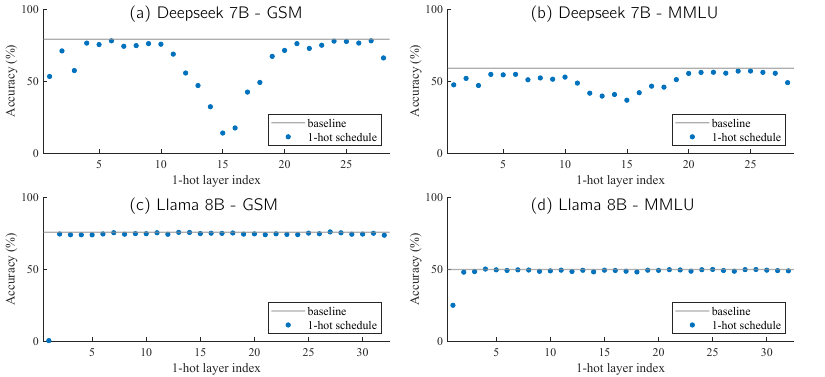}
  \caption{Ablations with one-hot stochastic attention. Baseline scores employ a fixed sampling budget of 16 and 256 for DeepSeek 7B and Llama 8B models, respectively. The horizontal axis shows the index of the model layer which is reduced to stochastic hard attention.}
  \label{fig:onehot-ablations}
\end{figure}

This section probes a fundamental question: how much does each attention layer matter? We find that one-hot stochastic attention, a limiting case of SANTA with $S=1$, acts as a surprisingly sharp diagnostic. It reveals which transformer layers are robust to severe approximation and which are not, offering a practical lens into where attention precision matters most.

We perform these ablations by setting $S=1$ in a single layer while keeping all other layers at fixed sample budgets: $S=16$ for DeepSeek 7B (28 layers) and $S=256$ for Llama 8B (32 layers). In this setting, each query attends to only one randomly sampled key, making that layer a hard attention bottleneck.

Models are evaluated on GSM8K and MMLU with the same protocol as Tables~\ref{tab:gsm8k-llama} and \ref{tab:mmlu-llama}. The effect of this localized bottleneck on accuracy is summarized in Fig.~\ref{fig:onehot-ablations}. Some layers exhibit almost no degradation when ablated in this way, while others collapse entirely, most notably middle layers in DeepSeek 7B and the first layer in Llama 8B.

In DeepSeek 7B, layers 12--18 are highly sensitive to approximation, while early and final layers are much more tolerant. In contrast, Llama 8B exhibits extreme sensitivity in just the first layer. These patterns suggest that attention importance is neither uniform nor trivially architectural.

We observe similar qualitative trends regardless of the baseline sample budget; the $S=16$ and $S=256$ settings are arbitrary. The early-layer sensitivity in Llama 8B is consistent with prior findings on high-entropy attention in shallow layers~\citep{clark-etal-2019-bert}, and with prior work that avoids approximating early layers~\citep{tang2024quest, sun2024triforce}. The middle-layer fragility in DeepSeek 7B remains an open and intriguing observation.

\section{Layer-Wise Sample Budget Through RL}
\label{app:rl}

\begin{algorithm}[h]
\caption{Layer-wise sample-budget optimization via REINFORCE}
\label{alg:rl-schedule}
\begin{algorithmic}[1]
\Require initial logits $\boldsymbol{\theta}\in\mathbb{R}^{28}$ ($\theta_i=0$),\;
         total budget $N{=}224$, learning-rate $\alpha{=}0.02$
\While{training}
    \State \textbf{Workers (in parallel):}
    \For[$E{=}10$ episodes / worker run]{$e=1,\dots,E$}

        \State Compute $p_i \gets \exp(\theta_i)\big/\sum_j\exp(\theta_j)$
        \State Sample schedule $\mathbf{S}\sim\text{Multinomial}(N,\mathbf{p})$
        \State Evaluate $16$ GSM8K questions using schedule $\mathbf{s}$;
               obtain reward $r\in[0,1]$
        \State Append $(\mathbf{s},r)$ to shared folder
    \EndFor
    \Statex
    \State \textbf{Aggregator:}
    \State Collect all new $(\mathbf{s}^{(k)},r^{(k)})$ tuples
           \hfill\Comment{$k=1\dots K$}
    \State Compute baseline $b \gets \frac{1}{K}\sum_{k} r^{(k)}$
    \State \textbf{Gradient:} 
           $\,\displaystyle
           \mathbf{g}\gets\frac{1}{K}\sum_{k}(r^{(k)}-b)
           \bigl(\mathbf{s}^{(k)}-N\mathbf{p}\bigr)$
    \State Update logits $\boldsymbol{\theta}\gets\boldsymbol{\theta}
           +\alpha\,\mathbf{g}$
    \State Write back updated $\boldsymbol{\theta}$
\EndWhile
\end{algorithmic}
\end{algorithm}

\begin{figure}[h]
  \centering
  \includegraphics[width=\linewidth]{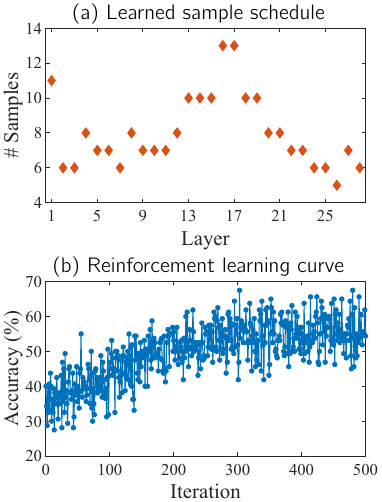}
  \caption{Learned schedule with a total budget of 224 samples across all 28 transformer blocks (DeepSeek 7B). (b) RL iterations over time.}
  \label{fig:rl-schedule}
\end{figure}

We now consider a global sample budget shared across all 28 transformer layers of DeepSeek 7B, allowing each layer to have a distinct sample count. We optimize this allocation using a REINFORCE-style algorithm~\citep{williams1992reinforce}, detailed in Algorithm~\ref{alg:rl-schedule}.

We train the policy on the GSM8K training set. The policy logits $\boldsymbol{\theta}\in\mathbb{R}^{28}$ define a softmax distribution over layers. At each iteration, we sample schedules using a multinomial draw over this distribution, with a total budget of 224 samples (i.e., $8 \times 28$). Each episode evaluates 16 questions; the reward is the fraction answered correctly. We intentionally starve the baseline with $S=8$ samples per layer to create room for the learned schedule to improve.

Fig.~\ref{fig:rl-schedule} shows the final learned schedule after 501 iterations. The allocation aligns strikingly with the ablation trends from Fig.~\ref{fig:onehot-ablations}: more samples are assigned to the first and middle layers, where one-hot attention was most detrimental. That REINFORCE independently rediscovers this structure from only reward signals confirms that these patterns are not artifacts of the ablation method but instead are intrinsic to the models. 

This experiment is not tuned for peak accuracy, but demonstrates that adaptive per-layer budgets meaningfully outperform fixed ones. Please see Appendix~\ref{supp:dist-rl} for implementation details of distributed reinforcement learning using a Kubernetes compute cluster.

\begin{table}[h]
  \caption{RL sample-schedule performance. Accuracy shows 95\% bootstrap confidence intervals.}
  \label{tab:rl-schedule}
  \centering
  \small
  \begin{tabular}{l l c}
    \toprule
    Schedule & Task & Accuracy (\%) \\
    \midrule
    Baseline (8 samples) & GSM8K & $52.46\pm1.47$ \\
    Baseline (8 samples) & MMLU & $44.55\pm1.47$ \\
    Learned schedule & GSM8K & $\mathbf{66.06\pm1.47}$ \\
    Learned schedule & MMLU & $\mathbf{48.86\pm1.53}$ \\
    \bottomrule
  \end{tabular}
\end{table}

\section{Distributed Reinforcement Learning on Kubernetes}
\label{supp:dist-rl}

\begin{figure}[h]
  \centering
  \includegraphics[width=\linewidth]{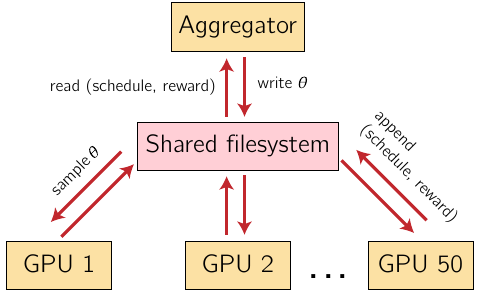}
  \caption{Kubernetes deployment schematic.}
  \label{fig:distributed rl}
\end{figure}

\paragraph{Topology.}
All pods mount a single read--write volume that serves as the rendezvous point.  
\begin{itemize}[leftmargin=1.4em,itemsep=0.35em]
  \item \textbf{Aggregator pod.}  
        Holds the current policy and records a running history of
        baselines and gradients.
  \item \textbf{Worker pods.}  
        Operate in a tight loop:  
        \textit{(i)} read the latest policy;  
        \textit{(ii)} sample a 28-dimensional schedule;  
        \textit{(iii)} evaluate ten batches of 16 questions;  
        \textit{(iv)} append a small JSON-lines result file to a
        designated ``inbox'' directory on the volume.
\end{itemize}

\paragraph{Aggregator loop.}
Whenever at least one new result file appears, the aggregator
moves all current files into a private folder for the next iteration,
computes the REINFORCE update and writes a new policy snapshot.
It never pauses or signals the workers; it simply processes whatever has
arrived.
In practice, workers are unlikely to finish evaluation simultaneously, thus nearly every policy update comes from one worker output (10 $\times$ 16 questions).  
If two land together, the aggregator just uses a double-sized batch,
which lowers the gradient’s variance.

\paragraph{Asynchrony and fault-tolerance.}
\begin{itemize}[leftmargin=1.4em,itemsep=0.35em]
  \item Workers can be terminated at any time; partial output vanishes harmlessly.
  \item Workers always proceed with the newest policy that has reached disk; no locking is needed.
  \item If the aggregator restarts it scans the history directory,
        resumes from the last completed iteration, and continues.
\end{itemize}

\paragraph{Scaling and resources.}
The worker Deployment’s replica count can be scaled up or down at any time during learning.  
For the 501 iterations in Fig.~\ref{fig:rl-schedule} we ran
50 GPU workers on a mix of RTX 3090, L4, and A10 cards (24 GB) for roughly 20 hours, totaling
$\approx1000$ GPU-hours.
The aggregator requests just one CPU core and a few hundred MiB of RAM.

\paragraph{Terminology.}
We keep the names aggregator and worker to highlight the
simple ``parameter-server'' flavor of the design, but these roles align
exactly with the common learner/actor split in distributed
reinforcement learning.

\section{Unique Key-Count}
\label{app:unique-keys}

\begin{table}[h]
  \centering
  \caption{Unique-key count vs.\ $S$ (8192 tokens, single-query).}
  \label{tab:unique_keys_vs_S}
  \scriptsize
  \setlength{\tabcolsep}{5pt}
  \renewcommand{\arraystretch}{1.15}
  \begin{tabular}{rccc}
    \toprule
    $S$ & SANTA & $S^2$ANTA-strat & $S^2$ANTA-sys \\
    \midrule
    32  & 11.0 & 12.7 & 13.0 \\
    64  & 16.7 & 21.2 & 21.7 \\
    128 & 25.9 & 35.8 & 36.4 \\
    256 & 38.3 & 60.3 & 61.6 \\
    \bottomrule
  \end{tabular}
\end{table}

Since all versions of SANTA sample keys with replacement, the number of \textit{unique} keys may be less than the sample budget $S$. In Table~\ref{tab:unique_keys_vs_S}, we empirically measure the average number of unique keys sampled by the last query position for 8192-token single-hop QA prompts (normalized per head and averaged across model layers). This is indicative of the key diversity observed in autoregressive decoding. For a given $S$, $S^2$ANTA variants achieve higher key diversity than default SANTA. Furthermore, the number of unique keys tends to be $\ll S$, implying that \textit{fewer than} $S$ unique rows of $V$ require memory accesses. See Appendix~\ref{app:impl-measure} for details about the measurement protocol.

\section{Implementation \& Measurement Protocol for Empirical Variance and Unique Keys Sampled}
\label{app:impl-measure}

\paragraph{Setting.}
For a single layer/head at the last prefill position $q^\star$, let the post-softmax attention over $n_k$ keys be $p_j \ge 0$ with $\sum_{j=1}^{n_k} p_j=1$, and let $V_j \in \mathbb{R}^{d_k}$ denote the corresponding value vectors. The dense-attention mean is
\[
\mu \;=\; \sum_{j=1}^{n_k} p_j\,V_j \;\in\; \mathbb{R}^{d_k}.
\]

\paragraph{SANTA estimator.}
Given a per-head sampling budget $S$, all SANTA variants estimate $\mu$ with
\[
\widehat{AV} \;=\; \frac{1}{S}\sum_{s=1}^S V_{J_s},
\]
where the key indices $\{J_s\}$ are drawn by one of three schemes: (i) multinomial; (ii) independent equal-mass stratified; or (iii) systematic (random-start, fixed-stride). These are the same variants used in the main paper; the code paths that implement them also record the per-head statistics described below at $q^\star$.

\subsection{What We Measure (per Head at $q^\star$)}

\paragraph{(1) Unique keys.}
The number of distinct keys touched by the sampler,
\[
U \;:=\; \big|\{J_1,\dots,J_S\}\big|,
\]
upper-bounded by $S$ (can be $<S$ if high-probability keys repeat). This is a proxy for how many memory locations each head actually reads.

\paragraph{(2) Variance trace of the estimator.}
We report the trace of the covariance (expected squared $\ell_2$ error) of $\widehat{AV}$,
\[
\operatorname{VarTrace}(\widehat{AV}) \;:=\; \mathbb{E}\big[\|\widehat{AV}-\mu\|_2^2\big] \;=\; \operatorname{tr}\!\left(\mathrm{Cov}(\widehat{AV})\right),
\]
specialized to each sampling scheme:

\begin{itemize}
\item \textbf{Multinomial (closed form).}
Let $\Sigma := \sum_{j=1}^{n_k} p_j\,(V_j-\mu)(V_j-\mu)^\top$ so that $\operatorname{tr}(\Sigma)=\sum_{j} p_j\|V_j\|_2^2-\|\mu\|_2^2$. Then

\[
\operatorname{VarTrace}_{\mathrm{multi}}(\widehat{AV}) \;=\; \frac{1}{S}\,\operatorname{tr}(\Sigma).
\]

\item \textbf{Independent equal-mass stratified (closed form).}
Partition $[0,1)$ into $S$ equal strata; for each stratum $m$, form the within-stratum discrete distribution induced by $p$ and let $\Sigma_m$ be the corresponding value covariance. Because draws across strata are independent,
\[
\operatorname{VarTrace}_{\mathrm{strat}}(\widehat{AV}) \;=\; \frac{1}{S^2}\sum_{m=0}^{S-1}\operatorname{tr}(\Sigma_m).
\]

\item \textbf{Systematic (replicate-based estimate).}
There is no general closed form that holds uniformly for systematic sampling, so we estimate variance by repeated random starts. With $R\ge2$ independent offsets, we compute $\widehat{AV}^{(r)}$ for each replicate, then use
\begin{equation*}
\begin{aligned}
\widehat{\operatorname{VarTrace}}_{\mathrm{sys}}(\widehat{AV})
&= \frac{1}{R-1}\sum_{r=1}^{R}\big\|\widehat{AV}^{(r)}-\overline{V}\big\|_2^2, \\
\overline{V}
&= \frac{1}{R}\sum_{r=1}^{R}\widehat{AV}^{(r)}.
\end{aligned}
\end{equation*}

This estimator is applied per head at $q^\star$.
\end{itemize}

\subsection{How We Aggregate and Report}

For each prompt, we compute $U$ and the variance-trace scalar per head at $q^\star$, then average over heads within a layer, average over layers, and finally average over prompts to obtain the quantities reported in the variance plots:
\[
\overline{U}(S,\text{variant}) \quad\text{and}\quad
\overline{T}(S,\text{variant}) \;=\; \overline{\operatorname{VarTrace}}(\widehat{AV}).
\]
This yields the curves summarized in the main text as a function of the budget $S$ and the SANTA variant.

\paragraph{Protocol summary.}
All measurements use evaluation-mode forward passes, prefill-only, and record statistics \emph{at the last token} per head; no additional dense attention pass is performed to form empirical errors. For multinomial and stratified we use the exact formulas above; for systematic we use the replicate-based estimator.

\section{Prompting, Parsing, and Grading}
\label{supp-sec:prompting}

In Section~\ref{GSM8K}, we evaluate the GSM8K test split (1319 prompts) 3 times. Evaluation code uses straightforward answer parsing and borrows grading code from PRM800K~\citep{lightman2023lets}. For all runs, $\text{temperature} = 0.6$, $\text{top-p}=0.95$, and $\text{repetition penalty}=1.1$. We use left-padded batches of 8--16 prompts. 

In Section~\ref{mmlu}, we average results over three runs on the MMLU validation set (1531 questions), using identical decoding parameters as GSM8K. 

In Section~\ref{ruler}, long-context benchmark results are averaged over 500 prompts and greedy decoding is employed with no repetition penalty applied.

In Section~\ref{subsec:kernel-results-32k} and Appendix~\ref{app:longbench-v2}, LongBench v2~\citep{bai2025longbench} results are averaged over three runs. Prompts longer than 32k tokens are truncated to 32k following the repository protocol. We report accuracy with 95\% bootstrap confidence intervals and the average number of scored tokens.

\paragraph{DeepSeek 7B + GSM8K}

\begin{center}\small
\begin{tabular}{@{}p{0.30\linewidth}p{0.62\linewidth}@{}}
\toprule
\textbf{Aspect} & \textbf{Procedure}\\
\midrule
Prompt & Question, blank line, then  
\texttt{Please reason step by step, and put your final numeric answer within \textbackslash boxed\{\}.}\\[2pt]

Answer extraction & Perform a brace-balanced search for the last occurrence of
\texttt{\textbackslash boxed\{\dots\}} (robust to nested braces).  
If none is found, fall back to the final $\sim$200 characters of the model output.\\[2pt]

Grading & Numeric equality judged by the MIT-licensed PRM800K grader; every prompt is counted, and blank or unparsable predictions score 0.\\
\bottomrule
\end{tabular}
\captionof{table}{Prompting and grading for DeepSeek 7B on GSM8K.}
\end{center}

\bigskip

\paragraph{DeepSeek 7B + MMLU}

\begin{center}\small
\begin{tabular}{@{}p{0.30\linewidth}p{0.62\linewidth}@{}}
\toprule
\textbf{Aspect} & \textbf{Procedure}\\
\midrule
Prompt & Question, four labeled choices, then
\texttt{Answer briefly. Put your final answer as a single letter inside
\textbackslash boxed\{\}.}\\[2pt]

Answer extraction & Brace-balanced search for the last occurrence of
\texttt{\textbackslash boxed\{\dots\}}; if none is found, inspect
the final $\sim$300 characters of the output.  Scan this region
backwards and take the first occurrence of A--D (case-insensitive),
then convert it to uppercase.\\[2pt]

Grading & The predicted letter is compared with the gold key
(case-insensitive).  If no valid letter is found---or the output is blank---the
item scores 0; otherwise it scores 1 when the letters match.\\
\bottomrule
\end{tabular}
\captionof{table}{Prompting and grading for DeepSeek 7B on MMLU.}
\end{center}

\bigskip

\paragraph{Llama 8B + GSM8K}

\begin{center}\small
\begin{tabular}{@{}p{0.30\linewidth}p{0.62\linewidth}@{}}
\toprule
\textbf{Aspect} & \textbf{Procedure}\\
\midrule
Prompt & Meta chat template.  
\emph{System}: ``You are a precise mathematician. Explain your reasoning step by step, then output ONLY the final
number on a new line.''  
\emph{User}: the GSM8K question.\\[2pt]

Answer extraction & Inspect the model reply in order:  
(i) the last non-empty line of the response;  
(ii) if that fails, the final 25 whitespace-separated tokens.  
From the selected chunk, take the first integer-looking token
(commas allowed, optional leading minus).\\[2pt]

Grading & Numeric equality judged by the MIT-licensed PRM800K grader;
every question is counted, and blank or unparsable answers score 0.\\
\bottomrule
\end{tabular}
\captionof{table}{Prompting and grading for Llama 8B on GSM8K.}
\end{center}

\bigskip

\paragraph{Llama 8B + MMLU}

\begin{center}\small
\begin{tabular}{@{}p{0.30\linewidth}p{0.62\linewidth}@{}}
\toprule
\textbf{Aspect} & \textbf{Procedure}\\
\midrule
Prompt &
Meta chat template.  
\emph{System}: ``You are a knowledgeable and concise subject-matter expert. Work through the problem step by step. Finally, on a new line, output ONLY the single capital letter (A, B, C, or D) that corresponds to the correct choice.''  
\emph{User}: the question followed by the four labeled choices A--D.\\[2pt]

Answer extraction & Take the substring after the last newline and
scan it left-to-right for the first capital A--D (case-insensitive).  
If none is found, inspect the final 10 whitespace-separated tokens of the
whole response, scanning them right-to-left for A--D.\\[2pt]

Grading & Predicted letter (upper-cased) is compared with the gold key.
Every question is counted; if no valid letter is found or it does not
match the gold answer, the item scores 0.\\
\bottomrule
\end{tabular}
\captionof{table}{Prompting and grading for Llama 8B on MMLU.}
\end{center}

\paragraph{Llama 8B + RULER prompts}

\begin{center}\small
\begin{tabular}{@{}p{0.30\linewidth}p{0.62\linewidth}@{}}
\toprule
\textbf{Aspect} & \textbf{Procedure}\\
\midrule
Prompt &
Prompts are generated for the following RULER sub-tasks with a length of 4096--32768 tokens: fwe, niah\_multivalue, qa\_1, and qa\_2. Greedy decoding is used. The model generates a maximum of 64 new tokens. \\[2pt]

Grading & fwe: partial credit = (\# of gold words present in the model output) / 3 (case-insensitive set match). niah\_multivalue: take the first four unique integers from the model output; partial credit = (\# of these that appear in the gold set) / 4. qa\_1 and qa\_2: correct if the gold answer string appears as a case-insensitive substring of the model output. \\
\bottomrule
\end{tabular}
\captionof{table}{Prompting and grading for Llama 8B on RULER prompts.}
\end{center}

\paragraph{Licenses.}
\begin{itemize}[nosep,leftmargin=1.5em]
  \item \textbf{Models.}  
        deepseek-ai/DeepSeek-R1-Distill-Qwen-7B (MIT) and  
        meta-llama/Meta-Llama-3.1-8B-Instruct
        (Llama 3.1 license) on Hugging Face.
  \item \textbf{Datasets.}  
        GSM8K openai/gsm8k (MIT) and  
        MMLU cais/mmlu (MIT) on Hugging Face. RULER prompts (Apache).  
  \item \textbf{Grader.}  
        PRM800K numeric grader (MIT).
\end{itemize}

\section{$S^2$ANTA-Prop Pseudocode}
\label{$S^2$ANTA-prop pseudocode}

\FloatBarrier

Algorithm~\ref{alg:s2anta-prop-mini} provides a high-level overview of $S^2$ANTA-prop. Algorithms~\ref{alg:s2anta-prop-pass1},~\ref{alg:s2anta-prop-budgets}, and~\ref{alg:s2anta-prop-pass2} show breakdowns of the kernels that perform score computation, sample budget allocation, and the $V$ gather, respectively.

\begin{algorithm}[h]
\caption{\textbf{S$^2$ANTA-prop} (overview, decode/GQA): proportional tile budgets + systematic sparse-$V$}
\label{alg:s2anta-prop-mini}
\footnotesize
\begin{algorithmic}[1]
\REQUIRE $Q\in\mathbb{R}^{H\times d_k}$; $K,V\in\mathbb{R}^{n_k\times H_{\rm kv}\times d_k}$ (GQA);
tile length $B_{\rm tile}$; samples/head $S$.
\ENSURE $O\in\mathbb{R}^{H\times d_k}$ approximating $\mathrm{softmax}(QK^{\mathsf T})V$.

\STATE $T\gets\lceil n_k/B_{\rm tile}\rceil$,\; $G\gets H/H_{\rm kv}$,\; $k(h)\gets\lfloor h/G\rfloor$,\;
$\mathcal{T}_t=[tB_{\rm tile},\min((t{+}1)B_{\rm tile},n_k))$.
\STATE \textbf{Pass 1:} compute $m_{h,t}=\max_{n\in\mathcal{T}_t}s_{h,n}$ and
$\ell_{h,t}=\sum_{n\in\mathcal{T}_t}\exp(s_{h,n}-m_{h,t})$; optionally stash $u_{h,n}=\exp(s_{h,n}-m_{h,t})$,
where $s_{h,n}=Q_h^\top K_{n,k(h)}/\sqrt{d_k}$.
\STATE \textbf{Budgets:} $m_h^\star=\max_t m_{h,t}$;\; $W_{h,t}=\exp(m_{h,t}-m_h^\star)\,\ell_{h,t}$;\;
allocate integers $\{S_{h,t}\}_t$ with $\sum_t S_{h,t}=S$ and $S_{h,t}\approx S\cdot W_{h,t}/\sum_t W_{h,t}$ (largest remainder);
set $\mathrm{inv}\delta_{h,t}=S_{h,t}/\ell_{h,t}$ and $a_{0,h,t}\sim U[0,1)$.
\STATE \textbf{Pass 2:} systematic counts $c_{h,n}$ from $x_{h,n}=\mathrm{inv}\delta_{h,t}u_{h,n}$, then
$O_h\leftarrow O_h+\sum_{n}c_{h,n}\,V_{n,k(h)}$.
\STATE \textbf{return} $O/S$.
\end{algorithmic}
\end{algorithm}

\begin{algorithm}[h]
\caption{S$^2$ANTA-prop, Kernel 1: pass-1 tile statistics (and optional $u$-stash)}
\label{alg:s2anta-prop-pass1}
\footnotesize
\begin{algorithmic}[1]
\REQUIRE $Q\in\mathbb{R}^{H\times d_k}$; $K\in\mathbb{R}^{n_k\times H_{\rm kv}\times d_k}$; tile length $B_{\rm tile}$.
\ENSURE $m\in\mathbb{R}^{H\times T}$, $\ell\in\mathbb{R}^{H\times T}$, and optional $u\in\mathbb{R}^{H\times n_k}$.

\STATE $T\gets\lceil n_k/B_{\rm tile}\rceil$;\;\; $G\gets H/H_{\rm kv}$.
\FOR{each KV head $k\in\{0,\dots,H_{\rm kv}{-}1\}$ and tile $t\in\{0,\dots,T{-}1\}$ \textbf{in parallel}}
  \STATE $\mathcal{T}_t\gets [tB_{\rm tile},\min((t{+}1)B_{\rm tile},n_k))$.
  \FOR{each grouped head $g\in\{0,\dots,G{-}1\}$}
    \STATE $h\gets kG+g$;\;\; $s_{h,n}\gets Q_h^\top K_{n,k}/\sqrt{d_k}$ for all $n\in\mathcal{T}_t$.
    \STATE $m_{h,t}\gets \max_{n\in\mathcal{T}_t}s_{h,n}$;\;\; $\ell_{h,t}\gets \sum_{n\in\mathcal{T}_t}\exp(s_{h,n}-m_{h,t})$.
    \STATE (Optional) $u_{h,n}\gets \exp(s_{h,n}-m_{h,t})$ for all $n\in\mathcal{T}_t$.
  \ENDFOR
\ENDFOR
\end{algorithmic}
\end{algorithm}

\begin{algorithm}[h]
\caption{S$^2$ANTA-prop, Kernel 2: proportional per-tile budgets (largest remainder)}
\label{alg:s2anta-prop-budgets}
\footnotesize
\begin{algorithmic}[1]
\REQUIRE Tile stats $m_{h,t},\ell_{h,t}$; samples/head $S$; seed.
\ENSURE Integer budgets $S_{h,t}$ with $\sum_t S_{h,t}=S$, plus $\mathrm{inv}\delta_{h,t}$ and offsets $a_{0,h,t}$.

\FOR{each head $h\in\{0,\dots,H{-}1\}$ \textbf{in parallel}}
  \STATE $m_h^\star\gets \max_{t} m_{h,t}$.
  \STATE $W_{h,t}\gets \exp(m_{h,t}-m_h^\star)\,\ell_{h,t}$;\;\; $Z_h\gets \sum_{t=0}^{T-1} W_{h,t}$.
  \STATE $q_t\gets S\cdot W_{h,t}/Z_h$;\;\; $S_{h,t}\gets \lfloor q_t\rfloor$ for all $t$.
  \STATE Give remaining $S-\sum_t S_{h,t}$ samples to tiles with largest fractional part of $q_t$ (largest remainder).
  \STATE $\mathrm{inv}\delta_{h,t}\gets (S_{h,t}>0)\ ?\ S_{h,t}/\ell_{h,t}\ :\ 0$;\;\; $a_{0,h,t}\sim \mathrm{Unif}(0,1)$.
\ENDFOR
\end{algorithmic}
\end{algorithm}

\begin{algorithm}[h]
\caption{S$^2$ANTA-prop, Kernel 3: systematic resampling and sparse-$V$ accumulation (GQA grouped)}
\label{alg:s2anta-prop-pass2}
\footnotesize
\begin{algorithmic}[1]
\REQUIRE $V\in\mathbb{R}^{n_k\times H_{\rm kv}\times d_k}$; $u_{h,n}$; $\mathrm{inv}\delta_{h,t},a_{0,h,t}$; tile length $B_{\rm tile}$.
\ENSURE Accumulator $O\in\mathbb{R}^{H\times d_k}$ (fp32).

\STATE $G\gets H/H_{\rm kv}$;\;\; $O\gets 0$.
\FOR{each KV head $k\in\{0,\dots,H_{\rm kv}{-}1\}$ and tile $t\in\{0,\dots,T{-}1\}$ \textbf{in parallel}}
  \STATE $\mathcal{T}_t\gets [tB_{\rm tile},\min((t{+}1)B_{\rm tile},n_k))$;\;\; heads $\mathcal{H}_k=\{kG,\dots,(k{+}1)G{-}1\}$.
  \FOR{each head $h\in\mathcal{H}_k$}
    \STATE $p\gets 0$.
    \FOR{each $n\in\mathcal{T}_t$ (increasing)}
      \STATE $x\gets \mathrm{inv}\delta_{h,t}\,u_{h,n}$.
      \STATE $c_{h,n}\gets \lfloor a_{0,h,t}+p+x\rfloor-\lfloor a_{0,h,t}+p\rfloor$;\;\; $p\gets p+x$.
    \ENDFOR
  \ENDFOR
  \STATE (Optional) compact emitted rows $\mathcal{E}_t=\{n\in\mathcal{T}_t:\exists h\in\mathcal{H}_k,\ c_{h,n}>0\}$.
  \FOR{each $n\in\mathcal{E}_t$ and each $h\in\mathcal{H}_k$}
    \STATE $O_h \gets O_h + c_{h,n}\,V_{n,k}$.
  \ENDFOR
\ENDFOR
\STATE \textbf{return} $O$ \;\; (apply final scaling by $1/S$ outside or in a final cast/scale kernel).
\end{algorithmic}
\end{algorithm}

\FloatBarrier

\section{$S^2$ANTA-Flash Pseudocode}
\label{$S^2$ANTA-flash pseudocode}
\FloatBarrier

Algorithm~\ref{alg:s2anta-flash-mini} provides a high-level overview of $S^2$ANTA-flash. Algorithms~\ref{alg:s2anta-flash-k1} and~\ref{alg:s2anta-flash-k2} describe the 2 kernels that constitute $S^2$ANTA-flash in detail.

\begin{algorithm}[h]
\caption{\textbf{S$^2$ANTA-flash} (overview, decode/GQA): uniform per-tile sampling + FlashDecoding-style merge}
\label{alg:s2anta-flash-mini}
\footnotesize
\begin{algorithmic}[1]
\REQUIRE $Q\in\mathbb{R}^{H\times d_k}$; $K,V\in\mathbb{R}^{n_k\times H_{\rm kv}\times d_k}$ (GQA);
tile length $B_{\rm tile}$; samples/head $S$; seed.
\ENSURE $O\in\mathbb{R}^{H\times d_k}$ approximating $\mathrm{softmax}(QK^{\mathsf T})V$.

\STATE $T\gets\lceil n_k/B_{\rm tile}\rceil$;\; $G\gets H/H_{\rm kv}$;\; $k(h)\gets\lfloor h/G\rfloor$;\;
uniform per-tile budget $S_{\rm tile}\approx S/T$ (rounded).
\STATE \textbf{Kernel 1 (tile partials):} for each tile $(k,t)$, compute $(m_{h,t},\ell_{h,t})$ and
$\widetilde{O}_{h,t}=\sum_{n\in\mathcal{T}_t} c_{h,n}V_{n,k}$ via systematic sampling with budget $S_{\rm tile}$.
\STATE \textbf{Kernel 2 (merge):} for each head $h$, set $m_h^\star=\max_t m_{h,t}$,
$W_{h,t}=\exp(m_{h,t}-m_h^\star)\ell_{h,t}$, $Z_h=\sum_t W_{h,t}$, and
$O_h=\frac{1}{Z_h}\sum_t W_{h,t}\left(\widetilde{O}_{h,t}/S_{\rm tile}\right)$.
\STATE \textbf{return} $O$.
\end{algorithmic}
\end{algorithm}

\begin{algorithm}[h]
\caption{S$^2$ANTA-flash, Kernel 1: fused per-tile partials (stats + systematic sampling)}
\label{alg:s2anta-flash-k1}
\footnotesize
\begin{algorithmic}[1]
\REQUIRE $Q\in\mathbb{R}^{H\times d_k}$; $K,V\in\mathbb{R}^{n_k\times H_{\rm kv}\times d_k}$; tile length $B_{\rm tile}$; samples/head $S$; seed.
\ENSURE Per-tile outputs $(m_{h,t},\ell_{h,t},\widetilde{O}_{h,t})$ for all heads and tiles.

\STATE $T\gets\lceil n_k/B_{\rm tile}\rceil$;\;\; $G\gets H/H_{\rm kv}$;\;\; $S_{\rm tile}\approx S/T$ (rounded).
\FOR{each KV head $k\in\{0,\dots,H_{\rm kv}{-}1\}$ and tile $t\in\{0,\dots,T{-}1\}$ \textbf{in parallel}}
  \STATE $\mathcal{T}_t\gets [tB_{\rm tile},\min((t{+}1)B_{\rm tile},n_k))$;\;\; heads $\mathcal{H}_k=\{kG,\dots,(k{+}1)G{-}1\}$.
  \FOR{each head $h\in\mathcal{H}_k$}
    \STATE $s_{h,n}\gets Q_h^\top K_{n,k}/\sqrt{d_k}$ for $n\in\mathcal{T}_t$.
    \STATE $m_{h,t}\gets \max_{n\in\mathcal{T}_t}s_{h,n}$;\;\; $u_{h,n}\gets \exp(s_{h,n}-m_{h,t})$;\;\; $\ell_{h,t}\gets \sum_{n\in\mathcal{T}_t}u_{h,n}$.
    \STATE $\mathrm{inv}\delta_{h,t}\gets (S_{\rm tile}/\ell_{h,t})$ (or $0$ if $\ell_{h,t}$ is tiny);\;\; $a_{0,h,t}\sim U[0,1)$.
    \STATE Use systematic counts $c_{h,n}$ from $x_{h,n}=\mathrm{inv}\delta_{h,t}u_{h,n}$ and accumulate $\widetilde{O}_{h,t}\gets \sum_{n\in\mathcal{T}_t} c_{h,n}\,V_{n,k}$.
  \ENDFOR
\ENDFOR
\end{algorithmic}
\end{algorithm}

\begin{algorithm}[h]
\caption{S$^2$ANTA-flash, Kernel 2: FlashAttention-style merge across tiles}
\label{alg:s2anta-flash-k2}
\footnotesize
\begin{algorithmic}[1]
\REQUIRE Per-tile $(m_{h,t},\ell_{h,t},\widetilde{O}_{h,t})$ from Alg.~\ref{alg:s2anta-flash-k1}; $S_{\rm tile}$.
\ENSURE Final $O\in\mathbb{R}^{H\times d_k}$.

\FOR{each head $h\in\{0,\dots,H{-}1\}$ \textbf{in parallel}}
  \STATE $m_h^\star\gets \max_{t} m_{h,t}$.
  \STATE $W_{h,t}\gets \exp(m_{h,t}-m_h^\star)\,\ell_{h,t}$;\;\; $Z_h\gets \sum_{t=0}^{T-1} W_{h,t}$.
  \STATE $O_h\gets \frac{1}{Z_h}\sum_{t=0}^{T-1} W_{h,t}\left(\widetilde{O}_{h,t}/S_{\rm tile}\right)$.
\ENDFOR
\STATE \textbf{return} $O$.
\end{algorithmic}
\end{algorithm}

\FloatBarrier

\section{Decode-Step Kernel Microbenchmark Details}
\label{app:decode_microbench}

This appendix describes the microbenchmark used to measure \emph{decode-step} attention latency for
FlashAttention2 decoding (``FlashDecoding''), FlashInfer decoding, and our custom $S^2$ANTA CUDA kernels.
The intent is to be explicit about tensor shapes, software/hardware environment, cache state, and what is
(and is not) included in the reported runtimes, in order to support a fair comparison.

\paragraph{What is being measured.}
We measure GPU kernel time for a single decoding step (one new token) as a function of the KV-cache
length $n_k$. Timing is performed with CUDA events, recorded on the active CUDA stream and synchronized on the
end event each iteration. The reported milliseconds therefore reflect elapsed time on the GPU timeline for the
CUDA work launched by a single backend call (which may internally launch multiple kernels).
All Python/CPU overhead and input construction are excluded by allocating and populating tensors
outside the timed region and recording events immediately around the backend call.

\paragraph{Hardware and driver.}
All measurements were run on a single \textbf{NVIDIA RTX 6000 Ada Generation} GPU (48\,GB).
The installed driver reported by \texttt{nvidia-smi} was 580.82.07 (CUDA compatibility reported as 13.0).

\paragraph{Software environment (versions).}
The benchmark environment used:
\begin{itemize}[leftmargin=1.4em,itemsep=0.25em]
  \item \textbf{PyTorch 2.6.0+cu124}
  \item \textbf{flash-attn 2.7.1} (FlashAttention2)
  \item \textbf{flashinfer-python 0.5.3}
\end{itemize}
Our $S^2$ANTA kernels are invoked through a PyTorch CUDA extension (custom CUDA code callable from PyTorch);
the timed region includes all CUDA kernels launched by this extension entrypoint.

\paragraph{Decode geometry and tensor shapes.}
We benchmark a single-batch decode step with head geometry matching a Llama-style GQA configuration:
\[
H=32,\quad H_{\mathrm{kv}}=8,\quad d_k=128,\quad G=H/H_{\mathrm{kv}}=4,
\]
where $H$ is the number of query heads, $H_{\mathrm{kv}}$ is the number of KV heads, and $d_k$ is the head dimension.
The query length is one token (decode), batch size is one, and attention is causal.

For each cache length $n_k$, the following tensors are constructed once and reused across timing iterations:
\begin{itemize}[leftmargin=1.4em,itemsep=0.25em]
  \item Query tensor with shape $[H, d_k]$.
  \item Key/value cache tensors stored in a sequence-major layout with shape $[n_k{+}1, H_{\mathrm{kv}}, d_k]$.
        (The extra $+1$ slot corresponds to the current token.)
\end{itemize}
All backends are run with the same causal setting and the standard softmax scaling $1/\sqrt{d_k}$, with dropout disabled.

\paragraph{Input distribution and dtype.}
All inputs are i.i.d. Gaussian (\texttt{torch.randn}) generated directly on the GPU.
Unless otherwise stated, the query, keys, and values are provided in bfloat16 (the harness also supports fp16).
A fixed RNG seed is used so that, for a given $n_k$, each backend sees the same input tensors.

\paragraph{Cold-cache protocol (L2 thrashing).}
To reduce sensitivity to favorable cache residency effects:
before \emph{each} timed iteration, we touch a large GPU buffer by performing an in-place update on a
512\,MB float32 tensor. This operation is intended to thrash GPU caches (including L2) via a large
read+write stream. The cache-thrash operation is executed \emph{outside} the timed region (before the start event is
recorded), and therefore is not included in the reported kernel time.

\paragraph{Timing protocol and trial structure.}
For each cache length $n_k$ and each backend, we run a fixed number of warmup and timed iterations:
\begin{itemize}[leftmargin=1.4em,itemsep=0.25em]
  \item \textbf{Warmup:} 10 untimed iterations to amortize one-time effects (e.g., kernel selection/caching inside libraries).
  \item \textbf{Timing:} 40 timed iterations. Each iteration performs an optional cache-thrash (not timed),
        then records a start CUDA event, launches the backend once, records an end CUDA event, synchronizes on the end event,
        and reads elapsed milliseconds from the event pair.
  \item \textbf{Aggregation:} we report the mean latency over the 40 timed iterations.
\end{itemize}
All tensor allocations and concatenations/appends used to construct inputs are performed outside the timed region.

\paragraph{Note on KV-cache appends in FlashDecoding.}
FlashAttention2 decoding performs a fused KV-cache append/update as part of its decode call, whereas
FlashInfer decoding and our $S^2$ANTA decode kernels do not perform an internal append in this benchmark.
To isolate the attention-like decode computation for FlashInfer and $S^2$ANTA, we provide a cache tensor that already
includes the current token in the preallocated $[n_k{+}1, H_{\mathrm{kv}}, d_k]$ storage.

This asymmetry is not material in the long-context regime: the fused append cost is $O(H_{\mathrm{kv}} d_k)$ per step
and does not scale with $n_k$, so its fractional contribution decreases as $n_k$ grows. Our speedup comparisons focus on
large KV-cache lengths (e.g., 8k--32k tokens), where the attention computation dominates. Including the fused append inside
FlashDecoding therefore does not change qualitative conclusions in this regime (and makes the FlashDecoding measurement
slightly more conservative relative to an ``attention-only'' timing).

\section{Additional $S^2$ANTA Kernel Results}
\label{app:s2anta-kernel-results}
\FloatBarrier

\begin{table}[h]
  \caption{Accuracy of Llama 8B on 4 long-context tasks with an 8k-token prompt. Tasks: frequent word extraction (FWE), needle-in-a-haystack (NIAH), single-hop QA (QA$_1$), multi-hop QA (QA$_2$). $S$: sample budget. Accuracy shows 95\% bootstrap confidence intervals.}
  \label{tab:ruler-llama-8k-prop-flash}
  \centering
  \scriptsize
  \setlength{\tabcolsep}{3pt}
  \renewcommand{\arraystretch}{1.15}
  \begin{tabular}{l r r r r r}
    \toprule
    Kernel & $S$ & FWE & NIAH & QA$_1$ & QA$_2$ \\
    \midrule
    $S^2$ANTA-prop & 64  & 96.60$\pm$0.52 & 93.23$\pm$0.69 & 70.73$\pm$2.43 & 67.33$\pm$2.46 \\
    $\bm{S^2}$\textbf{ANTA-prop} & \textbf{128} & \textbf{98.04$\pm$0.40} & \textbf{93.78$\pm$0.60} & \textbf{70.33$\pm$2.40} & \textbf{66.87$\pm$2.43} \\
    $S^2$ANTA-prop & 256 & 98.07$\pm$0.40 & 94.05$\pm$0.62 & 70.67$\pm$2.33 & 66.40$\pm$2.30 \\
    \midrule
    $S^2$ANTA-flash & 64   & 66.47$\pm$1.12 & 83.57$\pm$1.05 & 65.67$\pm$2.44 & 62.27$\pm$2.40 \\
    $S^2$ANTA-flash & 128  & 86.11$\pm$0.91 & 91.70$\pm$0.73 & 69.47$\pm$2.30 & 66.27$\pm$2.50 \\
    $S^2$ANTA-flash & 256  & 94.80$\pm$0.62 & 92.77$\pm$0.63 & 70.67$\pm$2.20 & 66.80$\pm$2.50 \\
    $\bm{S^2}$\textbf{ANTA-flash} & \textbf{1024} & \textbf{98.11$\pm$0.40} & \textbf{94.22$\pm$0.58} & \textbf{70.73$\pm$2.30} & \textbf{66.33$\pm$2.40} \\
    \midrule
    \textbf{SDPA (baseline)} & \textbf{---} & \textbf{98.53$\pm$0.60} & \textbf{94.55$\pm$1.00} & \textbf{71.80$\pm$3.90} & \textbf{68.60$\pm$4.20} \\
    \bottomrule
  \end{tabular}
\end{table}

\FloatBarrier
\section{HumanEval Code-Generation Results}
\label{app:humaneval}

Table~\ref{tab:humaneval} reports HumanEval~\citep{chen2021codex} pass@1 accuracy for Llama-3.1-8B-Instruct using the $S^2$ANTA-prop and $S^2$ANTA-flash kernels. We include HumanEval as an additional benchmark-diversity check. Because HumanEval prompts are short relative to the long-context decode regime targeted by SANTA, these results are intended to characterize accuracy rather than demonstrate kernel speedups. At $S=256$, both $S^2$ANTA-flash and $S^2$ANTA-prop remain within the bootstrap confidence interval of the SDPA baseline.

\begin{table}[h]
  \caption{HumanEval pass@1 accuracy for Llama 8B. $S$: sample budget. Pass@1 shows 95\% bootstrap confidence intervals.}
  \label{tab:humaneval}
  \centering
  \scriptsize
  \setlength{\tabcolsep}{4pt}
  \renewcommand{\arraystretch}{1.15}
  \begin{tabular}{l r r r}
    \toprule
    Method & $S$ & Pass@1 (\%) & Avg. tok. \\
    \midrule
    $S^2$ANTA-flash & 16  & 12.805$\pm$4.370 & 521.6 \\
    $S^2$ANTA-flash & 32  & 44.106$\pm$6.707 & 296.5 \\
    $S^2$ANTA-flash & 64  & 59.350$\pm$6.504 & 267.7 \\
    $S^2$ANTA-flash & 128 & 60.569$\pm$6.809 & 263.0 \\
    $\bm{S^2}$\textbf{ANTA-flash} & \textbf{256} &
    \textbf{63.008$\pm$6.911} & \textbf{260.1} \\
    \midrule
    $S^2$ANTA-prop & 16  & 46.545$\pm$6.301 & 274.6 \\
    $S^2$ANTA-prop & 32  & 61.789$\pm$6.504 & 267.1 \\
    $S^2$ANTA-prop & 64  & 60.366$\pm$7.012 & 262.5 \\
    $S^2$ANTA-prop & 128 & 62.398$\pm$6.911 & 262.5 \\
    $\bm{S^2}$\textbf{ANTA-prop} & \textbf{256} &
    \textbf{63.211$\pm$7.012} & \textbf{261.1} \\
    \midrule
    \textbf{SDPA (baseline)} & \textbf{---} &
    \textbf{65.244$\pm$7.317} & \textbf{261.4} \\
    \bottomrule
  \end{tabular}
\end{table}

\paragraph{Protocol.}
We evaluate the standard HumanEval pass@1 setting with greedy decoding and compare model-generated completions against the unit tests for each problem. We report pass@1 with 95\% bootstrap confidence intervals and the average number of generated/evaluated tokens. Since HumanEval examples are short-context code-generation tasks, they do not exercise the memory-bound 30k--32k-token decode regime emphasized in Section~\ref{subsec:kernel-results-32k}. Accordingly, we use HumanEval only to verify that the approximation does not catastrophically degrade code-generation accuracy at moderate budgets.

\FloatBarrier
\section{Same-State Fidelity and Tile-Sensitivity Analyses}
\label{app:same-state-fidelity}

This appendix reports additional analyses to characterize when the $S^2$ANTA kernels are most faithful to dense attention. We focus on 32k-token frequent-word extraction (FWE) and multi-hop QA (QA$_2$) prompts, since these are representative long-context tasks used in the main kernel-accuracy evaluation. Unless otherwise stated, we select 40 long prompts per task and examine 32 decoding steps per prompt. At each step, using the dense model's current prefix as the reference context, we apply the $S^2$ANTA approximation only in the final decoding layer and compare the resulting attention output $AV$ against the corresponding dense-attention output at the same decode state. Relative L2 error and cosine similarity are averaged over attention heads and evaluated decoding steps. Same-state top-1 agreement is measured by running a full one-token forward pass and checking whether the approximate model preserves the dense model's top-1 next-token prediction.

\subsection{Numerical Approximation Error}
\label{app:l2-cos-top1}

Table~\ref{tab:same-state-fidelity} reports same-state fidelity for $S^2$ANTA-flash and $S^2$ANTA-prop. Approximation quality improves with the sample budget $S$. The low same-state top-1 agreement for $S^2$ANTA-flash at $S=256$ is consistent with its downstream accuracy drop in Table~\ref{tab:ruler-llama-32k-prop-flash}. By contrast, the main operating points used in Section~\ref{subsec:kernel-results-32k}, namely $S=2048$ for $S^2$ANTA-flash and $S=128$ for $S^2$ANTA-prop, recover high cosine similarity and high top-1 next-token agreement.

\begin{table}[h]
  \caption{Same-state numerical fidelity of $S^2$ANTA-flash and $S^2$ANTA-prop on 32k-token FWE and QA$_2$ prompts. Relative L2 error and cosine similarity compare final-layer sparse and dense attention outputs $AV$ at matched decode states. Top-1 agreement reports whether a full one-token approximate forward pass preserves the dense model's top-1 next-token prediction.}
  \label{tab:same-state-fidelity}
  \centering
  \scriptsize
  \setlength{\tabcolsep}{3pt}
  \renewcommand{\arraystretch}{1.15}
  \begin{tabular}{l l r r r r}
    \toprule
    Task & Backend & $S$ &
    \shortstack{Mean relative\\L2 error} &
    \shortstack{Mean cosine\\similarity} &
    \shortstack{Same-state top-1\\agreement} \\
    \midrule
    FWE & flash & 256  & 0.244 & 0.963 & 91\% \\
    FWE & flash & 1024 & 0.091 & 0.993 & 97\% \\
    FWE & flash & 2048 & 0.055 & 0.997 & 97\% \\
    FWE & prop  & 128  & 0.131 & 0.986 & 96\% \\
    FWE & prop  & 256  & 0.085 & 0.994 & 97\% \\
    FWE & prop  & 1024 & 0.035 & 0.999 & 98\% \\
    \midrule
    QA$_2$ & flash & 256  & 0.572 & 0.851 & 91\% \\
    QA$_2$ & flash & 1024 & 0.233 & 0.968 & 97\% \\
    QA$_2$ & flash & 2048 & 0.144 & 0.986 & 99\% \\
    QA$_2$ & prop  & 128  & 0.208 & 0.967 & 99\% \\
    QA$_2$ & prop  & 256  & 0.141 & 0.983 & 97\% \\
    QA$_2$ & prop  & 1024 & 0.063 & 0.996 & 99\% \\
    \bottomrule
  \end{tabular}
\end{table}

\subsection{Sensitivity of $S^2$ANTA-Flash to Attention Support Across Tiles}
\label{app:tile-support-sensitivity}

Table~\ref{tab:flash-tile-support-sensitivity} studies how $S^2$ANTA-flash fidelity changes as attention mass spreads across more tiles. We fix the $S^2$ANTA-flash sample budget to $S=2048$, matching the main 32k-token operating point in Section~\ref{subsec:kernel-results-32k}, and use tile size 256. At each decode position, we compute the dense attention distribution and count how many tiles are needed to capture 90\% of total attention mass. We then bucket observations by this tile-support statistic.

$S^2$ANTA-flash is most accurate when most attention mass is concentrated in a small number of tiles. Fidelity degrades as attention mass becomes more diffuse, especially on QA$_2$, where relevant evidence may be spread across the context. This behavior is consistent with the ``sample waste'' explanation in Section~\ref{subsec:kernel-results-32k}: uniform per-tile sampling spends samples on low-mass tiles before deferred renormalization.

\begin{table}[h]
  \caption{Sensitivity of $S^2$ANTA-flash to attention support spread across tiles. Tile size is fixed to 256 and sample budget is fixed to $S=2048$. ``Tiles to reach 90\% attention mass'' is computed from the dense attention distribution at each evaluated decode state.}
  \label{tab:flash-tile-support-sensitivity}
  \centering
  \scriptsize
  \setlength{\tabcolsep}{4pt}
  \renewcommand{\arraystretch}{1.15}
  \begin{tabular}{l l r r}
    \toprule
    Task &
    \shortstack{Tiles to reach 90\%\\attention mass} &
    \shortstack{Mean relative\\L2 error} &
    \shortstack{Mean cosine\\similarity} \\
    \midrule
    FWE & 1 tile    & 0.035 & 0.998 \\
    FWE & 2 tiles   & 0.047 & 0.998 \\
    FWE & 3--4 tiles & 0.068 & 0.997 \\
    FWE & 5--8 tiles & 0.084 & 0.995 \\
    FWE & 9+ tiles  & 0.073 & 0.996 \\
    \midrule
    QA$_2$ & 1 tile    & 0.055 & 0.997 \\
    QA$_2$ & 2 tiles   & 0.117 & 0.991 \\
    QA$_2$ & 3--4 tiles & 0.170 & 0.982 \\
    QA$_2$ & 5--8 tiles & 0.175 & 0.981 \\
    QA$_2$ & 9+ tiles  & 0.190 & 0.978 \\
    \bottomrule
  \end{tabular}
\end{table}

\subsection{Sensitivity of $S^2$ANTA-Flash to Tile Size}
\label{app:tile-size-sensitivity}

Table~\ref{tab:flash-tile-size-sensitivity} reports a tile-size sweep using the same same-state fidelity protocol. We fix the sequence length to 32k tokens and the $S^2$ANTA-flash sample budget to $S=2048$. Smaller tile sizes create more total tiles and therefore increase the amount of uniform per-tile sampling spent on low-mass regions of the attention distribution. Larger tiles are substantially more faithful under this fixed-budget setting, especially on QA$_2$.

\begin{table}[h]
  \caption{Sensitivity of $S^2$ANTA-flash to tile size at 32k-token contexts. Sample budget is fixed to $S=2048$. Smaller tiles imply more total tiles and more opportunity for sample waste under uniform per-tile sampling.}
  \label{tab:flash-tile-size-sensitivity}
  \centering
  \scriptsize
  \setlength{\tabcolsep}{5pt}
  \renewcommand{\arraystretch}{1.15}
  \begin{tabular}{l r r r}
    \toprule
    Task & Tile size &
    \shortstack{Mean relative\\L2 error} &
    \shortstack{Mean cosine\\similarity} \\
    \midrule
    FWE & 32  & 0.145 & 0.985 \\
    FWE & 64  & 0.119 & 0.989 \\
    FWE & 256 & 0.055 & 0.997 \\
    \midrule
    QA$_2$ & 32  & 0.390 & 0.918 \\
    QA$_2$ & 64  & 0.269 & 0.958 \\
    QA$_2$ & 256 & 0.144 & 0.986 \\
    \bottomrule
  \end{tabular}
\end{table}

Together, Tables~\ref{tab:same-state-fidelity}--\ref{tab:flash-tile-size-sensitivity} show that fidelity improves with sample budget, that the main $S^2$ANTA operating points recover high same-state agreement, and that $S^2$ANTA-flash is most faithful when attention mass is concentrated in relatively few tiles. These observations suggest that dynamic tile-aware sample allocation could further improve the accuracy--latency trade-off beyond the static budgets studied in this work.
\FloatBarrier

\end{document}